%% file: main.tex
\documentclass{article} 
\usepackage{iclr2026_conference,times}

\input{math_commands.tex}

\usepackage{hyperref}
\usepackage{url}
\usepackage{booktabs}
\usepackage{multirow}
\usepackage{array}
\usepackage{graphicx}
\usepackage{soul}
\usepackage{amsthm}
\usepackage[dvipsnames]{xcolor}
\definecolor{linkblue}{RGB}{0, 0, 128}
\hypersetup{colorlinks=true, linkcolor=linkblue, citecolor=linkblue, urlcolor=linkblue}

\newtheorem{prop}{Proposition}[section]

\title{DMAP: A Distribution Map for Text}

\author{Tom Kempton\thanks{Work completed while at Featurespace, a Visa Solution.} \\
University of Manchester, UK \\
\texttt{thomas.kempton@manchester.ac.uk}\\
\AND
Julia Rozanova, Parameswaran Kamalaruban, Maeve Madigan\\
Risk and Security AI Lab, Visa Inc., UK\\
\texttt{\{yrozanov,kaparame,mmadigan\}@visa.com}
\AND
Karolina Wresilo\footnotemark[1], Yoann Launay\footnotemark[1]\\
University of Cambridge, UK\\
\texttt{kwresilo@hep.phy.cam.ac.uk}, \texttt{yl844@cam.ac.uk}
\AND
David Sutton \& Stuart Burrell\\
Risk and Security AI Lab, Visa Inc., UK\\
\texttt{\{dsutton,sburrell\}@visa.com} 
}

\iclrfinalcopy 
\begin{document}

\maketitle
\begin{abstract}
Large Language Models (LLMs) are a powerful tool for statistical text analysis, with derived sequences of next-token probability distributions offering a wealth of information. Extracting this signal typically relies on metrics such as perplexity, which do not adequately account for context; how one should interpret a given next-token probability is dependent on the number of reasonable choices encoded by the shape of the conditional distribution. In this work, we present DMAP, a mathematically grounded method that maps a text, via a language model, to a set of samples in the unit interval that jointly encode rank and probability information. This representation enables efficient, model-agnostic analysis and supports a range of applications. We illustrate its utility through three case studies: (i) validation of generation parameters to ensure data integrity, (ii) examining the role of probability curvature in machine-generated text detection, and (iii) a forensic analysis revealing statistical fingerprints left in downstream models that have been subject to post-training on synthetic data. Our results demonstrate that DMAP offers a unified statistical view of text that is simple to compute on consumer hardware, widely applicable, and provides a foundation for further research into text analysis with LLMs.
\end{abstract}
\raggedbottom
\section{Introduction}

A language model $p$ provides a wealth of information about a text $\underline w = (w_1\cdots w_{T})$ through the sequence of next-token probability distributions $p(\cdot|w_1\cdots w_{i-1})$. We ask how to use this information to learn something about the text $\underline w $ or the language model $p$. To date, most efforts that use a language model to report statistical properties of a text use metrics that measure how unexpected each token is under $p$. A standard such metric is the average log-likelihood of each observed token, 
\[
\frac{1}{T}\sum_{i=1}^T \log p(w_i|w_1\cdots w_{i-1}),
\]
while a related but coarser variant, log-rank, replaces $p(w_i|w_1\cdots w_{i-1})$ with the ordinal \emph{rank} $r(w_i|w_1\cdots w_{i-1})$ of the observed token in the descending list of next token probabilities. \emph{Perplexity} is the exponential of the negative average per-token log probability.

Log-likelihood, log-rank and perplexity have been widely used in the training, evaluation, and detection of language models. For example, a body of work seeks to use perplexity to predict the readability of texts \citep{trott2024measuring}, and to use the strength of the correlation between perplexity and human reading time as a measure of the quality of a language model \citep{oh2023transformer}. However, in some settings these metrics are problematic \citep{meister-cotterell-2021-language,fang2024wrong}, and often require contextualization to be useful. 

In this work we use \emph{contextualization} to describe the process of interpreting a raw statistic of a text (such as per-token log-likelihood) in terms of the content of the text. At the broadest level, one can give better answers to questions such as `is a per-token log-likelihood score of 4 unusually high?' if one knows whether the text under consideration is a factual essay about chemistry or a piece of creative writing. A finer-grained approach is to compare the log-likelihood of a token present in a text to the expected log-likelihood of alternative tokens randomly sampled from a language model. The crux of the \emph{contextualization problem} is that the way one should interpret a token $w_i$ being the third most likely, or having model probability $0.1$, depends on the number of reasonable token choices. These differences, encoded by the shape of the conditional distributions, are prone to persist over long passages of text. 

\begin{figure}[!t]
    \centering
    \includegraphics[width=\linewidth]{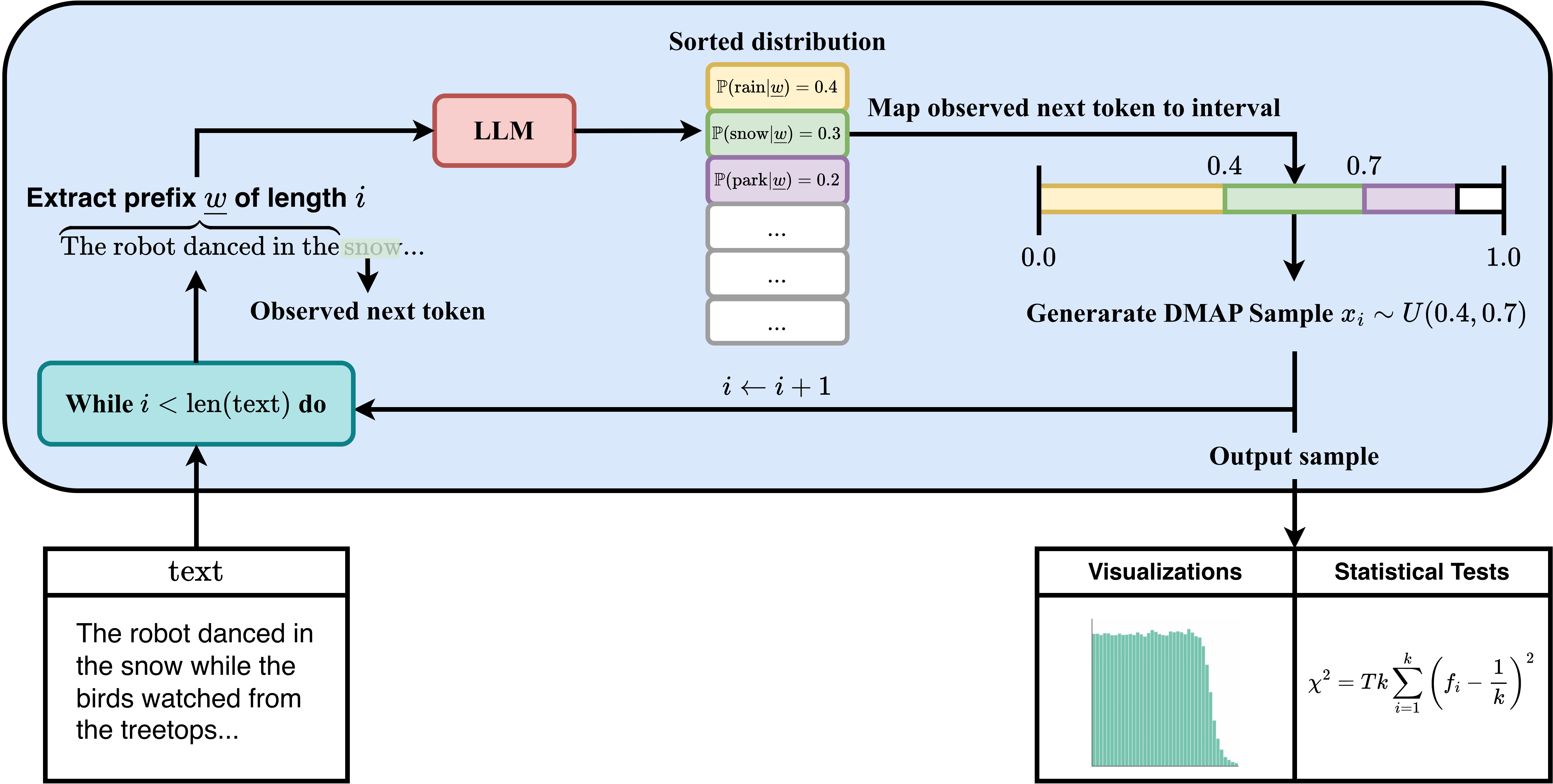}
    \caption{The DMAP algorithm. Given a text $\underline w$ of length $T$, this diagram illustrates how DMAP generates a collection of samples $x_1\cdots x_T$ in $[0, 1]$. These may be analyzed quantitatively with statistical tests or qualitatively by splitting $[0,1]$ into equal sized bins and plotting a histogram, as illustrated in Figure \ref{fig:comparison}. We initialize $i=1$. Our experiments demonstrate these visualizations identify decoding parameters (top-$p$, top-$k$, temperature), yield insights into black box machine-generated text detection algorithms based on probability curvature, and reveal statistical fingerprints left by performing supervised fine-tuning (SFT) on synthetic data.}
    \label{fig:DMAPgraphic}
\end{figure}

Recently, a body of work initiated by DetectGPT \citep{mitchell2023detectgpt} has tried to use a language model $p$ to address the contextualization problem and extract more nuanced information from next token probability distributions in the context of machine-generated text detection \citep{baofast, su2023detectllm, hans2024spotting}. Inspired by these ideas, we introduce a distribution map, DMAP, which is statistically rigorous and lends itself better to visualization. In essence, DMAP is a recipe for mapping a text $\underline w$ onto a density function $f:[0,1]\to\mathbb R^+$ that encodes information about both rank and perplexity in a principled way. As we shall see, this addresses the contextualization problem and provides a simple yet surprisingly powerful mathematical lens through which to measure and compare texts.

\paragraph{Contributions}

\begin{enumerate}
    \item \textbf{Introduce the DMAP Algorithm.} We introduce DMAP, a simple method to represent a text $\underline{w}$ through a language model $p$ as a set of samples in $[0,1]$ that jointly encode rank and probability information. DMAP is open-sourced\footnote{\url{https://github.com/Featurespace/dmap}.\label{fn:label1}}, computationally efficient, and can be applied effectively on consumer hardware with small models such as OPT-125m \citep{zhang2022opt}.
    \item \textbf{Demonstrate Applications of DMAP.}
    \begin{enumerate}
        \item \textbf{Validate generation parameters.} Detecting incorrect or inconsistent generation settings in published data, such as top-$k$, top-$p$, temperature, or the language model itself.
        \item \textbf{Design machine-generated text detectors.} Using DMAP, we identify design weaknesses of existing zero-shot detectors based on probability curvature and propose alternative principles for detector design.
        \item \textbf{Reveal statistical signatures of post-training data in instruction-tuned models.} We investigate overconfidence in instruction-tuned models using DMAP and reveal alignment between downstream generated text distributions and fine-tuning datasets, even without access to internal model probabilities for the generating model.
    \end{enumerate}
\end{enumerate}

We expect that the strengths of DMAP in statistical rigor and convenient visualization will see it find many applications beyond these.

\section{Related Work}

\paragraph{The Probability Integral Transform.} The probability integral transform (PIT, \citealt{fisher1928statistical}) was introduced as a method for mapping a continuous probability distribution on the real line onto the uniform distribution on $[0,1]$, and as such is similar in philosophy to DMAP. One could combine PIT with the distributional transform to deal with non-continuous random variables, this corresponds exactly to our step of sampling from the uniform distribution on $I_i$. However, it only makes sense to use PIT with categorical variables if one has a natural ordering of the categorical variables, and if the bias which one is hoping to detect presents itself in terms of this ordering. While we have framed our work as turning log-rank into a statistically useful measure, one could instead think of DMAP as applying PIT and the distributional transform to the case of texts generated by language models, but with a crucial additional step of dynamic re-ordering tokens by model probability.

\paragraph{Text Visualization.} GLTR \citep{gehrmann2019gltr} introduced a tool to color-code text according to token rank. This is an effective visualization, but inherits the issues of rank as a crude measure of where a token sits in the probability distribution. Our DMAP framework extends this visualization paradigm by providing a continuous, mathematically grounded representation that preserves both rank and probability information while enabling statistical inference.

\paragraph{Machine-Generated Text Detection.} Several key approaches have emerged to address the socially pertinent problem of detecting machine-generated text. DetectGPT \citep{mitchell2023detectgpt} pioneered the use of probability curvature, measuring how perturbing a text affects its likelihood under a language model. This approach was refined by DetectLLM \citep{su2023detectllm}, who adapted DetectGPT to use rank information, rather than exact probabilities. FastDetectGPT \citep{baofast} addressed efficiency issues related to perturbation and the contextualization problem by normalizing probability distributions at each step. More recently, Binoculars \citep{hans2024spotting} proposed a promising cross-model approach using probability ratios, though the theoretical justification for their normalization scheme remains unclear. Unlike these approaches, DMAP does not directly address the text detection problem. Instead, it provides a unified statistical framework that maps probability information to a standardized representation, enabling broader text analysis applications such as informing the design of future detectors.

\paragraph{Model Calibration and Overconfidence.} Recent work has identified systematic overconfidence in instruction-tuned language models. \cite{luo2025your} demonstrated that alignment procedures can degrade calibration, while \cite{shen2024thermometer} proposed temperature scaling methods for post-hoc calibration. \cite{chhikara2025mind} investigated the relationship between alignment objectives and confidence estimation, and \cite{zhu2023calibration} analyzed calibration across different model scales. \cite{yang2025alignment} explored how alignment training affects uncertainty quantification. DMAP contributes to this literature by providing a tool to visualize and quantify distributional changes induced by post-training procedures, revealing statistical fingerprints that persist in downstream model behavior.


The key distinction of DMAP is its ability to map arbitrary probability distributions to a standardized unit interval representation, enabling both intuitive visualization and rigorous statistical analysis while addressing the fundamental contextualization challenges that limit existing approaches based on surprisal metrics.

\section{DMAP: A Distribution Map for Text}
\subsection{Defining DMAP}\label{subsec:simpleDMAP}
Let $p$ be a language model, which we call the \emph{evaluator model}, and let $\underline w=w_1\cdots w_T$ be a text with tokens $w_i$ belonging to a vocabulary $V$.
For each sequence position $i\in\{1,\cdots, T\}$, the candidate tokens $v \in V$ can be ordered by decreasing probability $p(v \mid w_1\cdots w_{i-1})$. We construct a sequence of points $x_1\cdots x_T\in[0,1]^T$ referred to as a \emph{DMAP sample} as follows, see Figure \ref{fig:DMAPgraphic} for an overview.

Given a text $w_1\cdots w_T$ and an index $i\in\{1,\cdots, T\}$, DMAP works by first defining an interval $I_i$ and then sampling a point $x_i$ from the uniform distribution on $I_i$. If $w_i$ is judged by the language model $p$ to be the most likely token to follow $w_1\cdots w_{i-1}$, the interval $I_i$ will be $[0, p(w_i|w_1\cdots w_{i-1})]$. More generally, $I_i$ is the interval of length $p(w_i|w_1\cdots w_{i-1})$ whose left end point $a_i$ is the total mass of the set of tokens judged more likely by the language model.

Formally, let
$$V^+_i := \{v \in V : p(v|w_1\cdots w_{i-1}) > p(w_i|w_1\cdots w_{i-1})\}$$
be the set of tokens judged by $p$ to be more likely than $w_i$ to appear next in the sequence.


Now, define two points $a_i$ and $b_i$ by 
\[
a_i := \sum_{v\in V^+_i} p(v|w_1\cdots w_{i-1}),
\]
and $b_i:=a_i+p(w_i|w_1\cdots w_{i-1})$.

Finally, define the interval $I_i\subset [0,1]$ by $I_i:=[a_i,b_i]$ and, for each $i$, let $x_i = D(w_i| w_1\cdots w_{i-1})$ be chosen by sampling from the uniform distribution $U(a_i, b_i)$ on $I_i$, yielding the desired sequence $x_1\cdots x_T$. In practice, to visualize the resulting set of samples $x_i$ we split the unit interval $[0,1]$ into $k$ equally-sized bins and plot the resulting histogram. For the plots in this article we use $k=40$. Other quantitative and qualitative measures of the set $\{x_i\}_i$ may be informative and we leave this for future work.

In the following proposition, pure sampling from language model $p$ refers to the practice of autoregressively generating a sequence $w_1\cdots w_T$ by, at each step $i$, selecting token $w_i$ with probability $p(w_i|w_1\cdots w_{i-1})$. We later also consider top-k, nucleus and temperature sampling, see Section \ref{sec:sampling shape} for definitions.

\begin{prop}\label{prop:uniform distribution}
When generating a text $w_1\cdots w_T$ by pure sampling from language model $p$, the corresponding sequence $x_1\cdots x_T$ obtained by applying DMAP to $w_1\cdots w_T$ with evaluator model $p$ will be independent and identically distributed (i.i.d.) according to the uniform measure on $[0,1]$. 
\begin{proof}
See Appendix \ref{sec:proof}.
\end{proof}
\end{prop}

This proposition is particularly useful in Section \ref{sec:Validating}, since the i.i.d. structure of the sequence $x_1\cdots x_T$ allows one to use off the shelf results about convergence rates. In particular, it allows one to compute the chi-squared statistic of the distribution of the set of points $\{x_1,\cdots, x_T\}$, uncovering errors in the process of text generation with a high degree of confidence. 

We can also use DMAP to visualize texts using an evaluator language model modified by a decoding strategy such as top-$k$, top-$p$ or temperature sampling. For example, to see how $w_1\cdots w_T$ looks to language model $p$ at temperature $\tau$, one simply needs to replace the next token probabilities $p(v|w_1\cdots w_{i-1})$ with the probabilities $q(v|w_1\cdots w_{i-1})$ resulting from sampling from $p$ with temperature $\tau$. Proposition \ref{prop:uniform distribution} continues to apply; see Appendix \ref{sec:proof}.

\subsection{Defining Entropy-Weighted DMAP}\label{subsec:entropy-weighted-DMAP}
There are two practical issues with the definition of DMAP in Section \ref{subsec:simpleDMAP}. First, there is randomness in the selection of $x_i$ from $I_i$ that introduces noise and makes DMAP non-deterministic. Second, if the language model $p$ has high certainty about the choice of next token, then a choice of $x_i$ contains little useful information and we should assign less weight to this choice of $x_i$. We present an alternative version of DMAP that mitigates these shortcomings by removing this randomness and weighting the outcome of each token by the entropy of the next token probability distribution. A version of Proposition \ref{prop:uniform distribution} continues to hold, see Appendix \ref{sec:proof}. In Appendix \ref{sec:entropy-weighting} we make DMAP plots restricted to times where the next-token probability distribution has low entropy, and see that these plots contain little useful information, justifying our deweighting of these points.

In broad terms, our map $\hat{D}$ defined below makes two changes to DMAP. Firstly, rather than weighting each sample equally, we use the entropy of the next-token probability distribution to weight each sample point, up to a chosen maximum clipped weighting of $\lambda$ for stability. Secondly, we mitigate the randomness in the selection of $x_i$ and, instead of considering the (weighted) proportion of samples in each bin, we plot the expectation of this, removing the randomness. A precise mathematical description goes as follows. 

Given a language model $p$ and a text $w_1\cdots w_T$, generate the sequence of intervals $I_1\cdots I_T$ as with DMAP. Also, compute $h_1\cdots h_T$ where 
\[h_i:=-\sum_{v\in\mathcal V} p(v|w_1\cdots w_{i-1})\log p(v|w_1\cdots w_{i-1})
\] is the entropy of the probability distribution $p(\cdot|w_1\cdots w_{i-1})$. Let $h'_i:=\min\{h_i, \lambda\}$ and return the function $\hat D(\underline w):[0,1]\to\mathbb R$ given by
\[
\hat D(\underline w)=\hat D(w_1\cdots w_T,p):=\dfrac{\sum_{i=1}^T h'_i\frac{\chi_{I_i}}{|I_i|}}{\sum_{i=1}^T h'_i},
\]
where $\chi_{I_i}$ is the characteristic function equal to $1$ on $I_i$ and $0$ elsewhere, and $|I_i|$ is the length of interval $I_i$. This gives that ${\chi_{I_i}}/{|I_i|}$ is the function of integral $1$ taking value $0$ outside of $I_i$ and constant value on $I_i$. Thus, $\hat D$ is a step function of integral one supported on $[0,1]$. Finally, we may make an entropy-weighted DMAP plot by splitting the unit interval into, for example, $k=40$ bins and averaging the value of $\hat D$ over each bin. Since the entropy is potentially very large, in practice we recommend clipping it at $\lambda=2$ for stability. 

The entropy $h_i$ can be viewed as the expected information obtained by revealing the next token $w_i$. Therefore, entropy-weighting places more weight on the times when the choice of next token is more uncertain, and so we have more to learn. This amplifies the differences between our output distribution and the uniform distribution at $[0,1]$, by reducing the weight associated with times $i$ where the next token is known with extremely high probability, making DMAP a more sensitive tool. Hereafter, we use entropy-weighted DMAP unless otherwise specified. See Appendix \ref{sec:entropy-weighting} for more information on entropy-weighting.

\subsection{Interpreting DMAP Visualizations}
DMAP samples may be analyzed quantitatively, as in the $\chi^2$ tests of Section \ref{sec:Validating}, or qualitatively via simple histograms. These visualizations reveal whether certain portions of the model next-token probability distribution are systematically over- or underrepresented in the text. Three particularly common behaviors that we observe are: {\bf Head Bias} (e.g. Figure \ref{fig:comparison} (c)) - tokens viewed as likely by the evaluator model are over-represented in the text. {\bf Tail Bias} (e.g. Figure \ref{fig:comparison} (f)) - tokens viewed as unlikely by the evaluator model are over-represented in the text. Text generated by one base model and evaluated by another base model with a similar entropy will typically display this behavior. {\bf Tail Collapse} (e.g. Figure \ref{fig:comparison} (e)) - a small portion at the bottom of the evaluator model distribution is strongly under-represented in the text. Often seen in human-written text, this is consistent with the folklore intuition that model distributions place too much weight on tokens which are not realistic, an oft-cited motivation for top-$p$ (nucleus) sampling. 

\section{Illustrative Visualizations from DMAP}

\begin{figure}[t!]
    \centering
    \includegraphics[width=\linewidth]{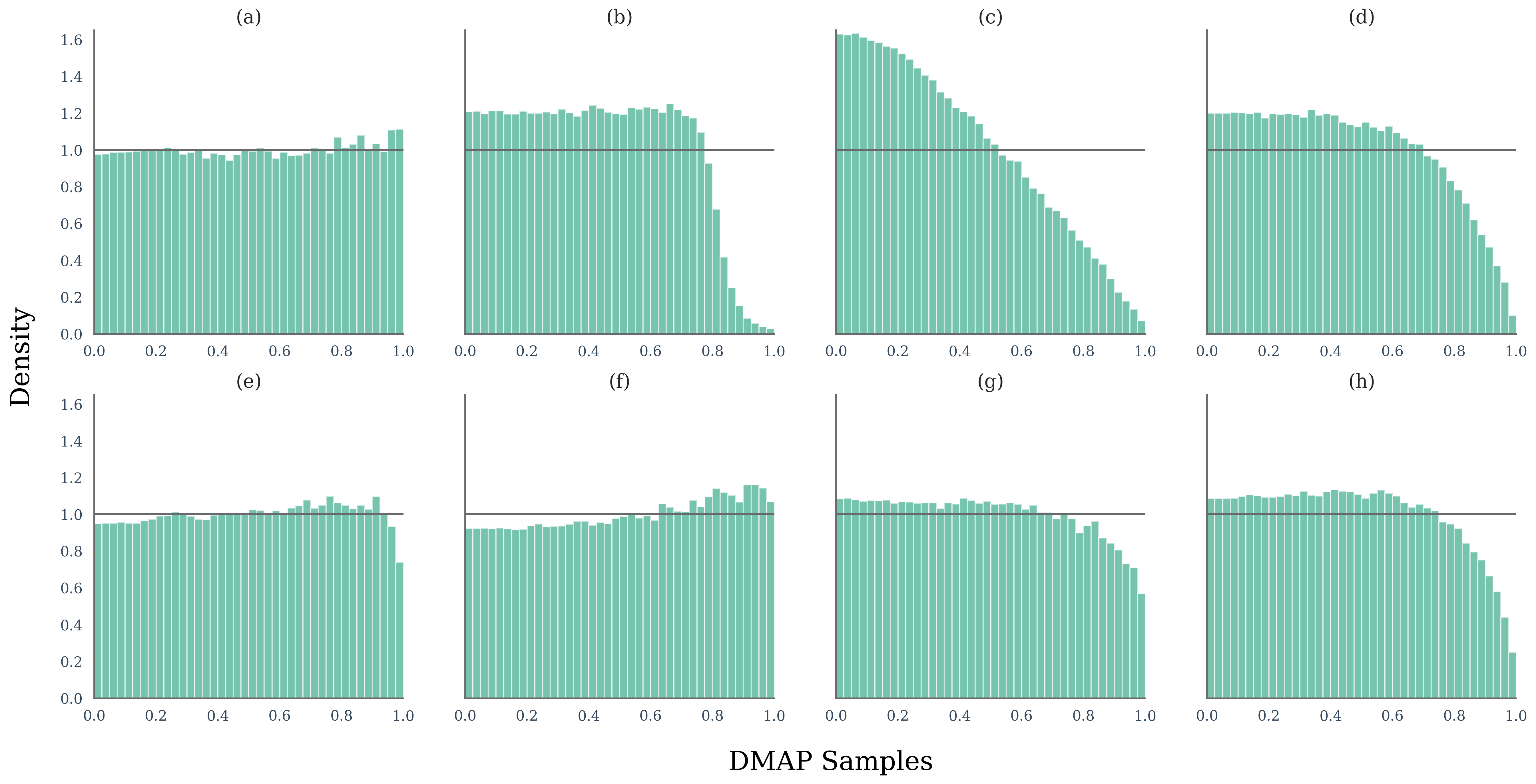}
    \caption{Illustrative DMAP histograms. The first row shows plots of XSum data \citep{narayan2018don} generated by OPT-125m, evaluated by OPT-125m. The generation strategies (left to right) are (a) pure sampling, (b) top-$p$ = 0.8 sampling, (c) temperature $\tau = 0.8$ sampling,  and (d) top-$k=50$. The second row shows various different types of text evaluated by DMAP: (e) a news dataset of human text from RAID, (f) text generated by Mistral 7B \citep{jiang2023mistral7b} using pure sampling (top-$p$=1), (g) text generated by Mistral 7B Instruct, and (h) text generated by ChatGPT from the Ghostbusters dataset \citep{verma2024ghostbuster}. OPT-125m was used as the scoring model to generate DMAP samples. See Appendix \ref{sec:furtherplots} for examples with larger evaluation models.}
    \label{fig:comparison}
\end{figure} 

This section takes a first look at some histogram plots from $\hat D$. We take text from different sources, both human and machine, and evaluate them using OPT-125m as recommended by \cite{mireshghallah2024smaller}. This evaluator demonstrates DMAP can be run effectively on consumer hardware in a few minutes. Throughout, we run DMAP over 300 texts each of around 300 tokens.

The top row of Figure \ref{fig:comparison} considers the case where the language model used to generate the texts is also used to generate the DMAP plot. We consider pure sampling (a), where token $w_i$ is selected at time $i$ according to its model likelihood $p(w_i|w_1\cdots w_{i-1})$, along with top-$p$ (nucleus) (b), temperature (c), and top-$k$ (d) sampling. See Appendix \ref{sec:sampling shape} for an explanation of these decoding strategies. As predicted in Proposition \ref{prop:uniform distribution}, the DMAP plot from pure-sampled text with the same generator and evaluator language models is close to the uniform distribution. Temperature, top-$p$ and top-$k$ sampling are methods for increasing the probability of sampling from the head of the model distribution, corresponding to the left side of a DMAP histogram. They each produce head-biased DMAP plots with a highly characteristic shape. In particular, the plot for top-p sampling is flat on $[0,\pi]$ before rapidly dropping off, and the plot for top-$k$ sampling is flat on roughly the first half of the interval before smoothly dropping off. These shapes can be explained in terms of the statistics of the space of conditional probability distributions; see Appendix \ref{sec:sampling shape}.

In the bottom row of Figure \ref{fig:comparison} we produce DMAP plots for text generated in various ways. Two things can be seen in the distribution of human-written text (e). The distribution generally shows that human-written tokens are somewhat surprising to OPT-125m. However, there is a sharp drop off on the very right hand side of the distribution, reflecting the fact that the bottom five percent of the OPT-125m distribution places too much weight on tokens which are not representative of human writing. This drop-off is much less pronounced when using more modern language models as detectors. Several other phenomena are easy to observe. Text generated by Mistral 7B (a base model) and evaluated by OPT-125m (f) is tail-biased. This means, on average, Mistral 7B generated text is surprising to OPT-125m. This phenomena should be expected when evaluating base models, see Section \ref{sec:explaining_tails}, and is repeated when Mistral \citep{jiang2023mistral7b}, Falcon \citep{almazrouei2023falconseriesopenlanguage} and Llama 3.1 8B \citep{grattafiori2024llama} look at the text generated by one another, see Figures \ref{fig:grid_without} and \ref{fig:grid_with}. In contrast to text generated by base models, text generated by instruction-tuned models Mistral-Instruct (g) and ChatGPT (h) is head-biased; on average these models are much more likely to pick tokens that are unsurprising to OPT. One might speculate that something in the post-training regime has profoundly altered the language model distribution. We study this question in more detail in Section \ref{sec:finetuning}. 

\section{Applications}\label{sec:apps}
This section contains three example applications of DMAP, though we anticipate further use cases.

\subsection{Validating Generation 
Parameters}\label{sec:Validating}

\begin{figure}[t]
    \centering
    \includegraphics[width=\linewidth]{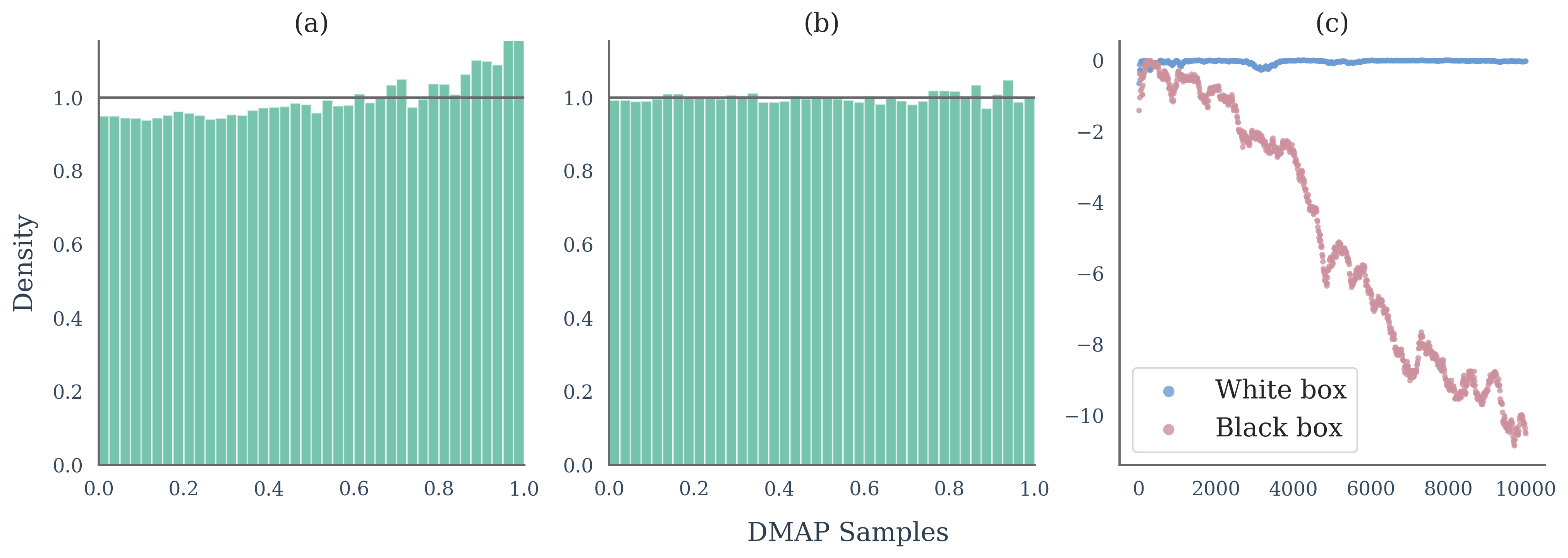}
    \caption{Quantitative validation of decoding parameters. (a) shows a DMAP plot for the black-box case of Llama 3.1 8B generated text evaluated by Mistral 7B. (b) shows a DMAP plot for the white-box case of Llama 3.1 8B generated text evaluated by Llama 3.1 8B. (c) plots the $log_{10}$ p-values resulting from our $\chi^2$ uniformity test. This demonstrates how quantitative evidence can be extracted from DMAP samples to investigate hypotheses about possible generation strategies. For example, (c) tells us that after evaluating $10000$ tokens of Llama generated text with Mistral-7B as the evaluation model, the probability that Mistral would produce text with such an extreme $\chi^2$ distribution is less than $10^{-10}$. We can conclude that it is not plausible that the text under review was generated by pure sampling from Mistral 7B.}
    \label{fig:goodnessoffit}
\end{figure} 

Research claims in natural language processing are often sensitive to the precise decoding parameters used to generate text, see Section \ref{sec:Detecting}. Therefore, when generating text, or using publicly available samples, it is crucial to be able to validate the reported language model and generation hyperparameters such as top-$p$, top-$k$, or temperature. 

For example, using DMAP, we discovered a major data-error affecting most of the top papers in zero-shot machine-generated text detection \citep{mitchell2023detectgpt, su2023detectllm, baofast, hans2024spotting, dugan2024raid}. At the time of their writing, HuggingFace enabled top-$k$ by default with $k=50$, making it very easy for researchers to innocently and accidentally leave top-$k$ enabled while reporting that experiments were run on texts generated with pure sampling. Further works such as \cite{dubois2025mosaic} use texts generated by other papers where the error was present. In this section, we present how DMAP may be used as a qualitative or quantitative tool to help researchers validate their data to prevent such occurrences.

\paragraph{Qualitative Validation.} DMAP is an easy method for visually checking the integrity of machine-generated text for open-weight models. When text is claimed to be generated by pure sampling, Proposition \ref{prop:uniform distribution} proves that the DMAP samples should be approximately uniformly distributed. Text generated by a given decoding strategy is clearly identified from the DMAP plots, see Figure \ref{fig:comparison}. Alternatively, to test a specific combination of decoding parameters, we could test whether a uniform distribution is recovered when modifying DMAP to use the probability measure $q$ arising from sampling from $p$ with the specified decoding strategy. 

\paragraph{Quantitative Validation.} To go beyond a visual check, we have the following method for determining the consistency of the data with a reported generation method. Firstly, DMAP is applied to generate points $x_1\cdots x_T$, adapted for the parameter settings, for example, by multiplying logits by $1/\tau$ for temperature sampling at temperature $\tau$. Next, the unit interval is divided into $k$ equal-sized bins, and frequencies $f_1\cdots f_k$ computed, where 
\[
f_i=\dfrac{\#\{j\in\{1,\cdots T\}: \frac{i-1}{k}\leq x_j< \frac{i}{k}\}}{T}.
\]
We follow the Terrell-Scott rule \citep{terrell1985oversmoothed} for choosing the number of bins, letting $k=(2T)^{\frac{1}{3}}$ and compute the $\chi^2$ statistic
\[
\chi^2=Tk\sum_{i=1}^k \left(f_i-\frac{1}{k}\right)^2.
\]

We then compute the probability that text generated by the reported generation method would have such an extreme $\chi^2$ statistic. This $p$-value determines the consistency of the text with the reported generation method, see Figure \ref{fig:goodnessoffit} and Appendix \ref{sec:pvalues} for further details, and \cite{goodman2008dirty} for guidance on the interpretation of p-values for this test.

\subsection{Designing Methods to Detect Machine-Generated Text}\label{sec:Detecting}
This section describes how to use DMAP to gain insight into differences between human- and machine-generated text, and what conclusions can be drawn on the performance of current detectors and the design of future detectors. This is pertinent since, as discussed in Section \ref{sec:Validating}, there are data errors in a substantial portion of the research on the detection of machine-generated text. Consequently, we discover that the principles underlying the design of many AI text detectors are not universally true, the primary being the \emph{probability curvature} thesis. Other observations for the interested reader are provided in Appendix \ref{app:detector-observations}.

\subsubsection{Probability curvature for base and instruction-tuned models} The main idea behind most statistical approaches to detecting machine-generated text is that humans tend to choose words from further down the probability distribution than machines do. This idea was presented as \emph{probability curvature} in DetectGPT \citep{mitchell2023detectgpt}, and the approach was broadly followed in DetectLLM \citep{su2023detectllm}, Fast-DetectGPT \citep{baofast}, and Binoculars \citep{hans2024spotting}. 

Probability curvature is effective in detecting text generated by an instruction-tuned model, or a sampling strategy that weights text generation towards the head of the distribution. However, the probability curvature idea is not supported by DMAP plots when pure sampling is used; in the white box case, Figure \ref{fig:comparison} (a) and (e), it is much weaker than previously reported and in the black box case, Figure \ref{fig:blackboxbasemodelcomparison}, it appears false. To further validate this claim, we rerun experiments on the efficacy of DetectGPT, Fast-DetectGPT and Binoculars in the black box detection setting. When top-$k$ sampling is used the detectors remain effective as previously reported, since this decoding strategy impacts probability curvature as expected by the methods. However, Table \ref{tab:auroc_results} shows that when pure sampling is used their performance is worse than a coin toss due to inversion of the probability curvature. While an AUROC under $0.5$ may suggest inverting classifications on either side of the threshold, this is not possible here due to the zero-shot nature of these techniques: the fixed directionality is dictated by the method and based on the probability curvature thesis.
\input{tables/main/blackbox-table.tex}

Table \ref{tab:auroc_results} shows that base models present an acute vulnerability to current detectors based on probability curvature. Fortunately, typical users will use instruction-tuned models which often exhibit head-bias as shown in Figure \ref{fig:comparison}. However, new methods are required to prevent a determined adversary bypassing existing detectors through the use of base models with careful prompting. In addition, despite enterprise detectors most often being exposed to text from instruction-tuned models, the vast majority of the existing literature performs experiments on base models. Our results show these two classes of models should be tested separately when researchers are designing and validating future detection algorithms. Further experimental details may be found in Appendix \ref{detectsetup}.

\subsection{Statistical Signatures of Post-Training Data in Instruction-Tuned Models}\label{sec:finetuning}

This section studies the effect of instruction fine-tuning on DMAP plots. In Figure \ref{fig:comparison} we saw that two instruction tuned models, ChatGPT and Mistral Instruct, systematically over-sample from the head of the OPT-125m probability distribution, whereas the non-instruction tuned base model Mistral 7B is tail-biased. This tail-biased behavior of the base model is understood and expected (see Appendix \ref{sec:explaining_tails}). The real question is what is causing instruction tuned models to systematically over-select from tokens which OPT-125m finds likely? 

It has previously been noted \citep{luo2025your, shen2024thermometer, chhikara2025mind, zhu2023calibration, yang2025alignment} that instruction-tuned models are over-confident, in the sense that when answering questions they assign too much weight to answers they believe are likely correct. We hypothesize this may be an explanation for what we observe in our head-biased DMAP plots that place too much weight on likely tokens at the head of the distribution.

DMAP plots provide an indirect method for studying model overconfidence by checking for this head-bias in DMAP samples. We analyze whether this bias is present in the training data, the base model, and the corresponding fine-tuned model, thereby shedding light on how biases in training data may propagate to downstream model generations. This analysis is particularly relevant given the common practice of using temperature-sampled responses as fine-tuning data \citep{dubois2023alpacafarm}.

We fine-tuned two sizes of Pythia models \citep{biderman2023pythia}
on the OASST2 dataset \citep{kopf2023} with responses provided by humans, Llama 3.1 8B at temperature $1$, and Llama 3.1 8B at temperature $0.7$. Figure \ref{fig:SFT_pythia1b} contains DMAP plots for texts generated by Pythia 1B with four fine-tuning configurations. For experimental details, DMAP plots for the fine-tuning data, and repeated results on the other model, see Appendix \ref{sec:finetuning_experimental_details}.

Our main finding is that the only head-biased model was the one fine-tuned on temperature-sampled data, which was in turn the most head-biased fine-tuning data. We also see that human-written instruction fine-tuned data has a dramatic tail-collapse, much larger than seen in other human-written text. In addition, we see increased density in the final bin for fine-tuned models, which might demonstrate how DMAP could be used to detect mild overfitting during SFT and inform early-stopping strategies, a hypothesis we leave to explore further in future work. Finally, we note that all of our instruction-tuned models had a less tail-biased distribution than the original Pythia model. These initial experiments point to the utility of DMAP in studying how instruction-tuning affects the distributional properties of machine text produced by downstream models.

\begin{figure}[t]
    \centering
    \includegraphics[width=\linewidth]{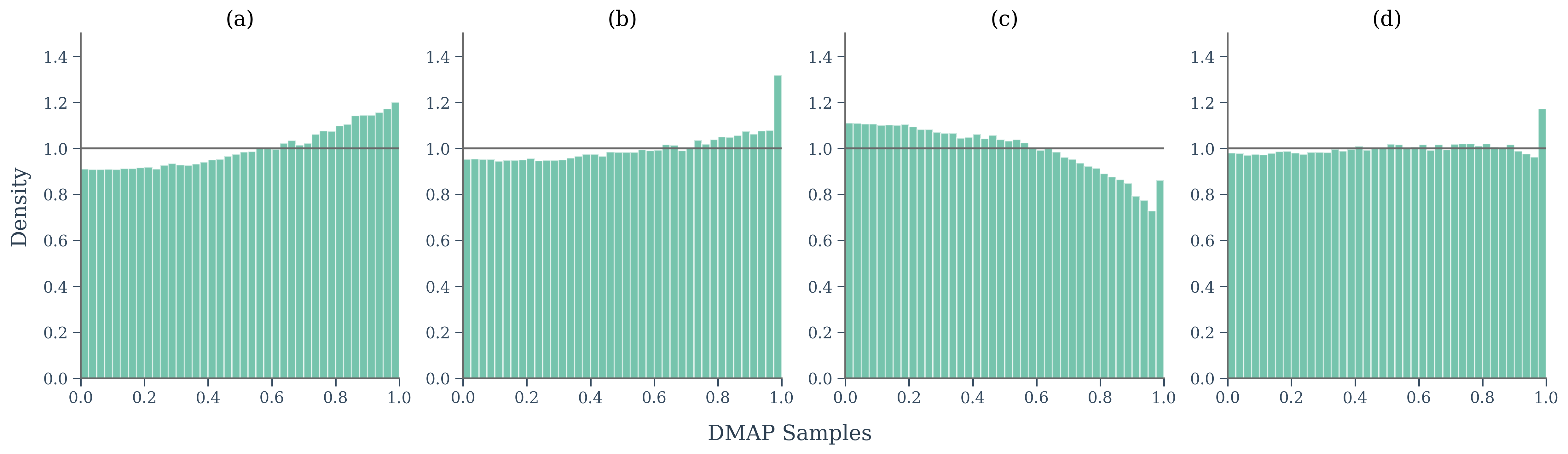}
    \caption{Using DMAP to investigate the effect of SFT post-training on synthetic data. DMAP plots generated with the evaluator model (OPT-125m) on text generated by Pythia 1B models with (a) no fine-tuning, (b) fine-tuned on OASST2 human data \citep{kopf2023}, (c) fine-tuned on OASST2 with responses regenerated  by Llama 3.1 8B at temperature 0.7, and (d) fine-tuned on OASST2 with responses regenerated by Llama 3.1 8B at temperature 1.0.}
    \label{fig:SFT_pythia1b}
\end{figure}

\section{Conclusion}
We introduced DMAP, a mathematically principled method for mapping text through language models to a standardized statistical representation. DMAP addresses the fundamental contextualization problem that has limited previous approaches to statistical text analysis with language models. Three initial case studies demonstrate the broad utility of this tool. Through parameter validation, we demonstrate how DMAP can validate data integrity,
helping to prevent natural and inevitable human errors propagating through the research ecosystem.
Our re-examination of detection methods showed that the widely-accepted \emph{probability curvature} principle fails for pure sampling from base models, challenging foundational assumptions in the field and pointing towards principles for designing more robust detection strategies.
Importantly, our experiments highlight that existing detectors are critically vulnerable to adversarial attacks using base models with pure sampling in the black box setting due to inversion of the expected curvature. Finally, our analysis of instruction-tuned models reveals how statistical fingerprints of training data persist in downstream model outputs, providing new insights into the sources of overconfidence in aligned models.

Beyond these specific applications, DMAP offers several key advantages: it is computationally efficient, requiring only forward passes through models such as OPT-125m that may be run on consumer hardware; it provides intuitive visualizations that make complex distributional patterns immediately apparent; and it enables rigorous statistical testing. The method's model-agnostic nature means it can be applied across different architectures and scales, making it a versatile tool for research and industry.

Our initial investigations point toward promising future research directions. For instance, DMAP might be used for data curation and efficient fine-tuning \citep{ankner2025perplexed}, or to advance calibration approaches for instruction-tuned models, rather than using temperature scaling to align perplexity with pre-trained models. Alternatively, future work may calibrate DMAP distributions to match human text patterns, potentially offering more nuanced control over model confidence during frontier model training. The distinct DMAP signatures we observed across different text domains (poetry, news, technical writing) suggest the method could formalize how language model performance varies across contexts, separating inherent prediction difficulty from model-specific limitations. Perhaps most intriguingly, the dramatic differences in DMAP plots when generator and evaluator models differ point toward entirely new approaches to language model identification and forensic analysis. We are most excited for the applications we have yet to anticipate and believe that we have only scratched the surface of what we can learn through close examination of next-token probability distributions. DMAP provides a clear, principled window into that rich statistical landscape.

\section{Reproducibility Statement}
All datasets used are public or generated as detailed in our experimental setup in Section \ref{sec:apps} and Appendices \ref{detectsetup} and \ref{sec:finetuning_experimental_details}. Code implementing DMAP is available on GitHub.\footref{fn:label1}

\section{Acknowledgments}
K.W. was financially supported by the Science and Technology Facilities Council (STFC) through the STFC DiS-CDT at the University of Cambridge. Y.L. was supported by the STFC DiS-CDT scheme and the Kavli Institute for Cosmology, Cambridge. We would also like to thank our reviewers for helpful feedback and comments.

\bibliographystyle{iclr2026_conference}
\bibliography{main}

\appendix

\section{Proof of Proposition \ref{prop:uniform distribution}}\label{sec:proof}
Suppose that a text $w_1\cdots w_T$ has been generated by pure sampling from language model $p$. At time $i$, token $v$ is selected with probability $p(v|w_1\cdots w_{i-1})$. Token $v$ defines an interval $[a,b]$ of length $p(v|w_1\cdots w_{i-1})$ (see the definition of DMAP, Section \ref{subsec:simpleDMAP}). The points $a,b$ are functions of both $v$ and the context $w_1\cdots w_{i-1}$, to keep notation clean we suppress this dependence here.

Then, following the DMAP algorithm, a point $x_i$ is selected according to the uniform distribution on $[a,b]$. Let $\mathcal P$ denote the partition of $[0,1)$ into intervals $[a,b)$ corresponding to different tokens $v$ given fixed context $w_1\cdots w_{i-1}$.

In order to prove that the process of picking $v$ according to $p(\cdot|w_1\cdots w_{i-1})$ and then picking $x_i$ according to $U([a,b])$ produces points distributed according to $U([0,1])$, it is enough to show that for any interval $(c,d)$, $\mathbb P(x_i\in(c,d))=d-c$. It is enough to prove this for intervals $(c,d)$ which are fully contained in one of the intervals in the partition $\mathcal P$, the result for larger $(c,d)$ follows by standard rules of probability.

Now, given context $w_1\cdots w_{i-1}$, let $(c,d)$ be a subset of some interval $(a,b)$ in $\mathcal P$, where $(a,b)$ is the interval corresponding to selection of some token $v$. Then 
\begin{eqnarray*}
\mathbb P(x_i\in(c,d)) &=& \mathbb P(x_i \in (a,b)).\dfrac{\mathbb P(x_i\in (c,d))}{\mathbb P(x_i\in (a,b))}\\
&=& p(v|w_1\cdots w_{i-1}).\dfrac{(d-c)}{(b-a)}\\
&=& (b-a)\dfrac{(d-c)}{(b-a)} = d-c
\end{eqnarray*}
as required. Here we used that $x_i\in(a,b)$ exactly when the token $v$ defining interval $(a,b)$ was selected, which happens with probability $p(v|w_1\cdots w_{i-1})=b-a$.

We have shown here that, for any choice of context $w_1\cdots w_{i-1}$, $x_i$ will be distributed according to $U(0,1)$. Thus we have shown that $x_i$ is independent of $w_1\cdots w_{i-1}$, and so the resulting sequence $x_1\cdots x_T$ is i.i.d. 

Finally we note that no properties of the language model were assumed in the above proof. In particular, one could define $p(\cdot|w_1\cdots w_{i-1})$ to be the next token probability distribution resulting from sampling from Llama 3.1 at temperature 0.7. In that case, our result about $x_i$ being i.i.d. according to the uniform distribution on $[0,1]$ continues to hold, provided the same language model decoding strategy pair is used in the generation of text and generation of DMAP plots.

Proposition \ref{prop:uniform distribution} does not hold directly for entropy weighted DMAP, since entropy weighted DMAP does not produce points $x_i$. One can make corresponding statements about the expected integral of $\frac{\chi_{I_i}}{|I_i|}$ over any interval $A$ and see that entropy weighted DMAP plots for texts generated by the evaluator model do converge to the uniform distribution, but rates of convergence do not follow so easily.

\section{On The Shape of DMAP Plots for Temperature, Top-p, and top-\texorpdfstring{$k$}{k} sampling}\label{sec:sampling shape}
Recall that, for a language model $p$ and context $w_1\cdots w_{i-1}$, and for given values of $\tau,$ $\pi$ and $k$, temperature sampling picks token $p_i$ with probability
\[
q_{\tau}(w_i|w_1\cdots w_{i-1})=\dfrac{p(w_i|w_1\cdots w_{i-1})^{\frac{1}{\tau}}}{\sum_{v\in\mathcal V} p(v|w_1\cdots w_{i-1})^{\frac{1}{\tau}}}.
\]
Top-$k$ sampling defines the top-$k$ set $\mathcal V_{k}$ to be the $k$ tokens with highest model (conditional) probability $p(\cdot|w_1\cdots w_{i-1})$ and assigns probability
\[
q_k(w_i|w_1\cdots w_{i-1})=\dfrac{p(w_i|w_1\cdots w_{i-1})}{\sum_{v\in \mathcal V_k} p(v|w_1\cdots w_{i-1})}
\]
to tokens $w_i$ in the top-$k$ set, and zero mass outside of the top-$k$ set. 

Finally top-p (nucleus) sampling orders the tokens in $\mathcal V$ by decreasing conditional probability $p(\cdot|w_1\cdots w_{i-1})$ and defines the top-p set $\mathcal V_{\pi}$ to be the first $m$ tokens, where $m$ is the smallest integer for which $\sum_{i=1}^m p(v_i|w_1\cdots w_{i-1})\geq\pi$. Top-p sampling then chooses token $w_i$ with probability
\[
q_{\pi}(w_i|w_1\cdots w_{i-1})=\dfrac{p(w_i|w_1\cdots w_{i-1})}{\sum_{v\in \mathcal V_{\pi}}p(v|w_1\cdots w_{i-1})}.
\]

We then ask, given a long text sampled by top-$k$ sampling, what are the limiting statistics of the size of the top-$k$ set. The expected shape of the DMAP plot for top-$k$ sampling is the function 
\[
\mathbb P(\text{The total mass of the top-$k$ set is greater than }x)
\]
renormalized to have integral one. This is a decreasing function, typically nearly flat on $[0,0.5]$, before smoothly decreasing towards $0$.

Similarly, for top-$p$ sampling, the expected shape is the function
\[
\mathbb P(\text{The total mass of the top-}\pi \text{ set is greater than }x),
\]
again renormalized to have mass one. This is flat on $[0,\pi]$, before decaying very rapidly. There is a sharp cut off reflecting the majority of cases where the size of the top-$\pi$ set is only just larger than $\pi$, and a non-vanishing tail reflecting the fact that the proportion of times for which the top-$\pi$ set has mass close to 1 is non-zero.

For the shape of the DMAP plots in the case of temperature sampling, we mention only that it is a smooth deformation. There is an interval close to 0 upon which the DMAP is nearly flat, but this is much smaller than in the top-$k$ case as it is a function of the mass of the most likely token, rather than of the top-$k$ most likely tokens.

\section{Why are Black-Box Base Models Tail Biased?}\label{sec:explaining_tails}
The fact that our plots of texts generated by one base language model and evaluated by another base language model (Figures \ref{fig:comparison} and \ref{fig:blackboxbasemodelcomparison}) are tail-biased is partially supported by theory. In \cite{kempton2025localnormalizationdistortionthermodynamic}, it was shown that for a language model $P$ and a generation length $T$, pure sampling from $P$ is the unique way of maximizing the sum of entropy and per-token log-likelihood (as judged by $P$). That is, whenever we sample from a language model $Q$ for which the entropy is at least as large as the entropy of $P$, the resulting plot text must have lower expected per-token log-likelihood (as measured by $P$). This is a statement about log-likelihood, not a statement about position in the probability distribution, and so doesn't directly correspond to saying the DMAP plots should be tail-biased, but it gives a strong indication in that direction. 

\section{Rates of Convergence in Proposition \ref{prop:uniform distribution}}\label{sec:pvalues}

Proposition \ref{prop:uniform distribution} implies that DMAP plots of texts generated by pure sampling from a language model, evaluated by the same language model, should look roughly flat. One might wonder whether we can make this statement more precise with a rate of convergence. The answer is positive, as detailed in the following proposition.

\begin{prop}\label{prop:convergencerate}
Given a language model $P$, a text $w_1\cdots w_T$ generated by $P$ and a number of bins $k$, let $x_1,\cdots x_T$ be the set of points generated by DMAP with $P$ as the evaluator model. Further define frequencies \[
f_i=\dfrac{\#\{j\in\{1,\cdots T\}: \frac{i-1}{k}\leq x_j< \frac{i}{k}\}}{T}
\]
and the $\chi^2$ statistic
\[
\chi^2=Tk\sum_{i=1}^k \left(f_i-\frac{1}{k}\right)^2
\]
as in Section \ref{sec:Validating}. Then $\chi^2$ is asymptotically distributed according to the $\chi^2_{k-1}$ (the $\chi^2$ distribution with $k-1$ degrees of freedom).
\end{prop}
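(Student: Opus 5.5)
By Proposition \ref{prop:uniform distribution}, when $w_1\cdots w_T$ is generated by pure sampling from $P$ and DMAP is applied with the same evaluator $P$, the points $x_1,\dots,x_T$ are i.i.d.\ $U([0,1])$. The plan is to reduce the statement to the classical Pearson goodness-of-fit theorem for a multinomial vector, and prove that theorem via the multivariate central limit theorem. Throughout, $k$ is fixed and $T\to\infty$.

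\textbf{Step 1: reduce to a multinomial.} Let $N_i = T f_i$ be the bin counts. Since the $x_j$ are i.i.d.\ uniform, each $x_j$ falls in bin $i$ independently with probability exactly $1/k$; the boundary $x_j=1$ has probability zero. Hence $(N_1,\dots,N_k)\sim\mathrm{Multinomial}(T;1/k,\dots,1/k)$. We can rewrite
\[
\chi^2 = Tk\sum_{i=1}^k\left(f_i-\tfrac1k\right)^2=\sum_{i=1}^k\frac{(N_i-T/k)^2}{T/k},
\]
which is exactly Pearson's statistic with equal cell probabilities.

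\textbf{Step 2: apply the CLT.} Write $N=\sum_{j=1}^T Y_j$, where $Y_j\in\{e_1,\dots,e_k\}$ is the one-hot indicator of the bin containing $x_j$. The $Y_j$ are i.i.d.\ with mean $\pi=(1/k,\dots,1/k)$ and covariance $\Sigma=\mathrm{diag}(\pi)-\pi\pi^\top$. The multivariate CLT then gives
\[
Z_T:=\sqrt{T}\,\mathrm{diag}(\pi)^{-1/2}(N/T-\pi)\Rightarrow Z\sim\mathcal N(0,\,I-uu^\top),
\qquad u=\sqrt{\pi}=k^{-1/2}(1,\dots,1).
\]
Since $\chi^2=\|Z_T\|^2$, the continuous mapping theorem gives $\chi^2\Rightarrow\|Z\|^2$.

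\textbf{Step 3: identify the limit.} Because $u$ is a unit vector, $I-uu^\top$ is the orthogonal projection onto $u^\perp$, a subspace of dimension $k-1$. Choose an orthonormal basis whose first vector is $u$. In that basis $Z$ has independent standard normal coordinates on $u^\perp$ and vanishes along $u$. Therefore $\|Z\|^2\sim\chi^2_{k-1}$, as claimed.

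The main obstacle is not analytic but lies in Step 1: the whole argument rests on the exact i.i.d.\ uniformity of the $x_j$. This holds only for the randomized (non-entropy-weighted) DMAP, and only when generation and evaluation use the same model and decoding strategy. I would state that hypothesis explicitly, invoking Proposition \ref{prop:uniform distribution} and its extension to modified measures $q$. I would also remark that the result is asymptotic for fixed $k$. The Terrell–Scott choice $k=(2T)^{1/3}$ makes $k$ grow with $T$, so for that choice a separate normal approximation $(\chi^2-(k-1))/\sqrt{2(k-1)}\Rightarrow\mathcal N(0,1)$ would be the appropriate statement.
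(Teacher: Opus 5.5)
Your proof is correct and takes the same route the paper indicates. The paper gives no details: it only says the result is the standard Pearson goodness-of-fit theorem, obtained from Proposition~\ref{prop:uniform distribution} by a ``slightly delicate'' application of the central limit theorem. Your Steps 1--3 (multinomial reduction, multivariate CLT with the degenerate covariance $I-uu^\top$, and the rank-$(k-1)$ projection argument) supply exactly those missing details. Your closing caveat is also a fair point the paper does not address: the statement is for fixed $k$, while the Terrell--Scott choice $k=(2T)^{1/3}$ used in Section~\ref{sec:Validating} lets $k$ grow with $T$.
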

An immediate application of this is that we evaluate the plausibility of the statement `Text $w_1\cdots w_T$ was generated by pure sampling from language model $P$'. To do this, we compute the $\chi^2$ statistic $c$ arising from the text $w_1\cdots w_T$ and use standard statistics packages to compute the probability that points drawn according to the $\chi^2_{k-1}$ distribution would be larger than $c$. For large $T$, this is arbitrarily close to the probability that text generated by language model $P$ would generate as extreme a $\chi^2$ statistic. 

We mention one note of caution about the word `asymptotically' in Proposition \ref{prop:convergencerate}. For finite $T$, the $\chi^2$ statistic is not perfectly distributed according to the $\chi^2$ distribution (indeed it is supported on a finite set), and so our computations of p-values are not exact. A conservative rule of thumb is that $T$ should be at least $10k$ for reliable $p$-values, so one should evaluate at least 400 tokens when plotting histograms with 40 bins.

Proposition \ref{prop:convergencerate} is a standard result in statistics, proved by a slightly delicate application of the central limit theorem.

One should be cautious in interpreting p-values, as they are famously prone to misinterpretation. Many articles can be found explaining these misconceptions, see for example \cite{goodman2008dirty}.

\section{Efficiency and Empirical Convergence Rates}

This section provides a discussion on algorithmic efficiency and empirical results on the convergence of DMAP as the number of tokens, and therefore DMAP samples, grows. This latter evidence is to supplement the theoretical analysis of convergence in Proposition \ref{prop:convergencerate}. Figure \ref{fig:sample_300} shows DMAP plots for $200$, $2000$, and $20,000$ tokens. We see that for extremely small token sizes there is significant noise, while this quickly stabilizes. By $2000$ tokens, we see identifiable patterns emerging, and by $20,000$ the noise is fairly negligible with strong characteristic patterns present. If you are working with extremely small amounts of data with DMAP, you may reduce the impact of the noise by reducing the number of bins in your histogram. Even at $200$ tokens this approach allows us to recover information about decoding strategies, see Figure \ref{fig:small_bins}.

The runtime efficiency and speed of DMAP is dominated by the underlying language model. For vanilla transformers, we therefore have quadratic time and memory complexity, though this could be improved if considering language models based on more efficient alternative attention mechanisms \citep{sun2025efficientattentionmechanismslarge}.

\begin{figure}[t!]
    \centering
    \includegraphics[width=\textwidth]{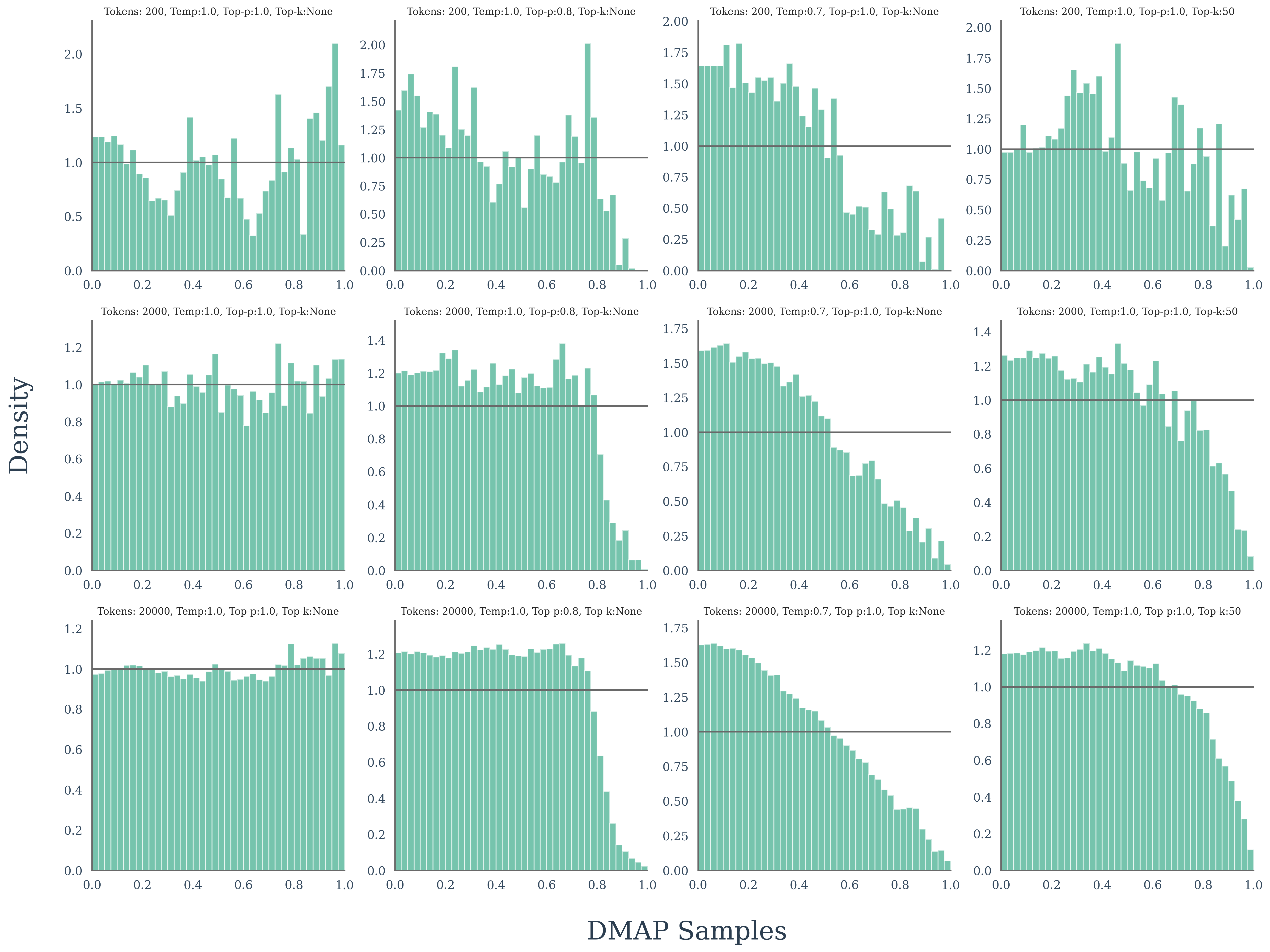}
    \caption{Empirical convergence analysis of DMAP with $40$-bin histograms. This figure illustrates the convergence of $40$-bin DMAP histograms as the number of tokens, and thus DMAP samples, increases from $200$ to $20,000$. At extremely low sample sizes we see high variability and noise, while the strong characteristic shapes we expect emerge as the number of tokens increases. We recommend choosing the number of bins as a function of the number of tokens being evaluated, for example using the Terrell-Scott rule, so as to mitigate this noise.}
    \label{fig:sample_300}
\end{figure}

\begin{figure}[t!]
    \centering
    \includegraphics[width=\textwidth]{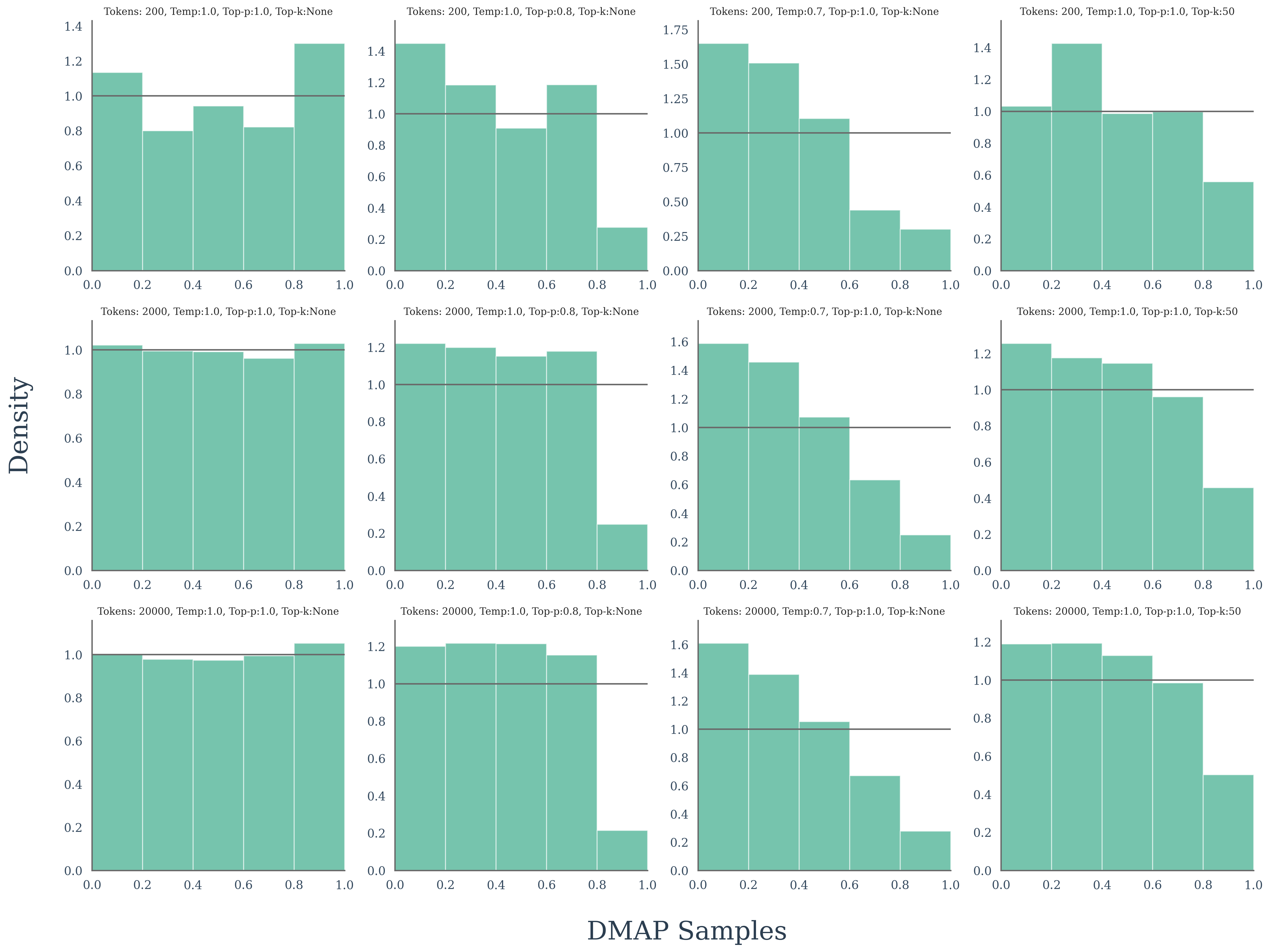}
    \caption{Empirical convergence analysis of DMAP with 5-bin histograms. This figure shows that noise due to extremely small sample sizes may be mitigated by using fewer bins. In this regime, we still see approximate versions of the characteristic patterns present in the top row of Figure \ref{fig:comparison}, in contrast to the noisy estimates found in the first row of Figure \ref{fig:sample_300}.}
    \label{fig:small_bins}
\end{figure}

\section{Entropy Weighting}\label{sec:entropy-weighting}

In Section \ref{subsec:entropy-weighted-DMAP} we introduced entropy weighting as a way of suppressing the effect of tokens selected at times where uncertainty around the next token is very low and a single option has extremely high likelihood (say probability 0.995). In such situations, the points plotted by DMAP tend to have distribution close to the uniform distribution irrespective of the method of text generation.

To illustrate this point, we consider DMAP plots of texts generated by Mistral-7B-v0.3 and evaluated by Llama-3.1-8B. The full entropy-weighted DMAP plot is present as the first plot on the second row of Figure \ref{fig:grid_without}. In Figure \ref{fig:entropy_slice} we separate out points by the entropy of the next token probability distribution, plotting the low entropy case in (a), the medium entropy case in (b), and the high entropy case in (c). As predicted, in the low entropy case the DMAP plot is extremely flat, since very little information is uncovered from these points. This illustrates our reasons for giving lower weight to points of entropy below $2$. In this example, tokens where the entropy is below $0.5$ make up $7474$ of the $77321$ tokens evaluated. In contrast, (b) and (c) contain noticeable peaks. 

\begin{figure}[t!]
    \centering
    \includegraphics[width=1\linewidth]{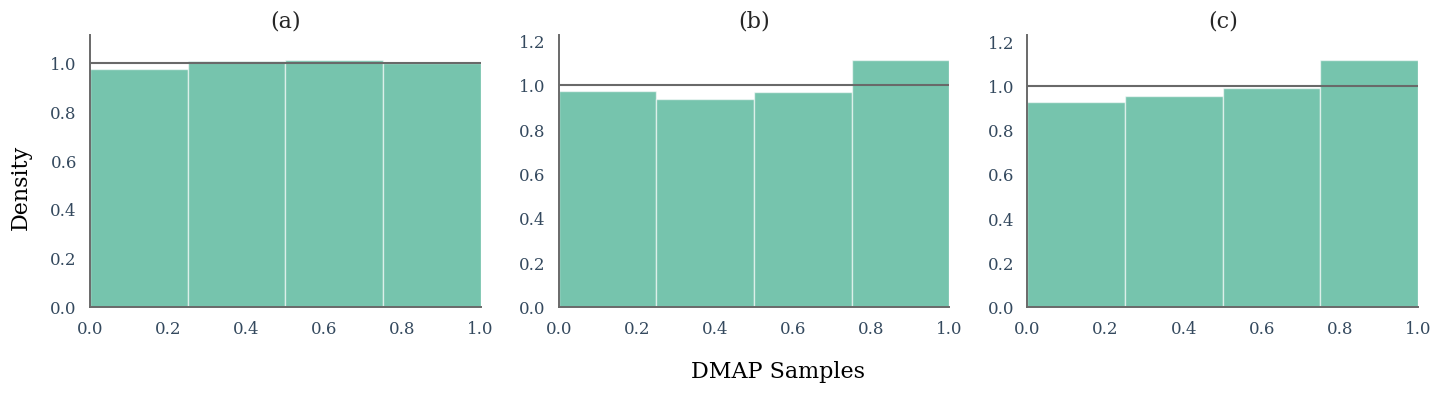}
    \caption{DMAP plots of text generated by Mistral and evaluated by Llama. Plot (a) shows only those tokens for which the entropy of the next token probability distribution is below 0.5, (b) shows entropy in the range (0.5,1), and (c) shows points where the entropy is greater than 2. We see that the plot (a) of low entropy tokens looks flat even though we are in the tail-biased case of pure sampling from a base model with different models used for generation and detection.}
    \label{fig:entropy_slice}
\end{figure}

\section{Prompt Sensitivity Analysis}

This section includes experiments investigating the sensitivity of DMAP to the inclusion or exclusion of the prompt. In addition, we test the sensitivity of DMAP to the first few tokens for which there is no or small amounts of context to shape the conditional distributions.

First, we repeated a large scale white box and black box experiment, to compare the efficacy of DMAP with and without access to the prompt. The results are shown in Figure \ref{fig:grid_without} (without access to the prompt) and Figure \ref{fig:grid_with} (with access to the prompt). These plots show that in realistic settings where one may wish to apply DMAP, the utility is equal with or without access to the prompt. However, note that without access to the prompt, despite the characteristic shapes being strongly present, there is a slight increase in the noisiness or irregularity in the plot. This is expected, since the reduced context will cause increased variance in the conditional distributions, particularly for tokens near the start of the text.

\begin{figure}[t!]
    \centering
    \includegraphics[width=\textwidth]{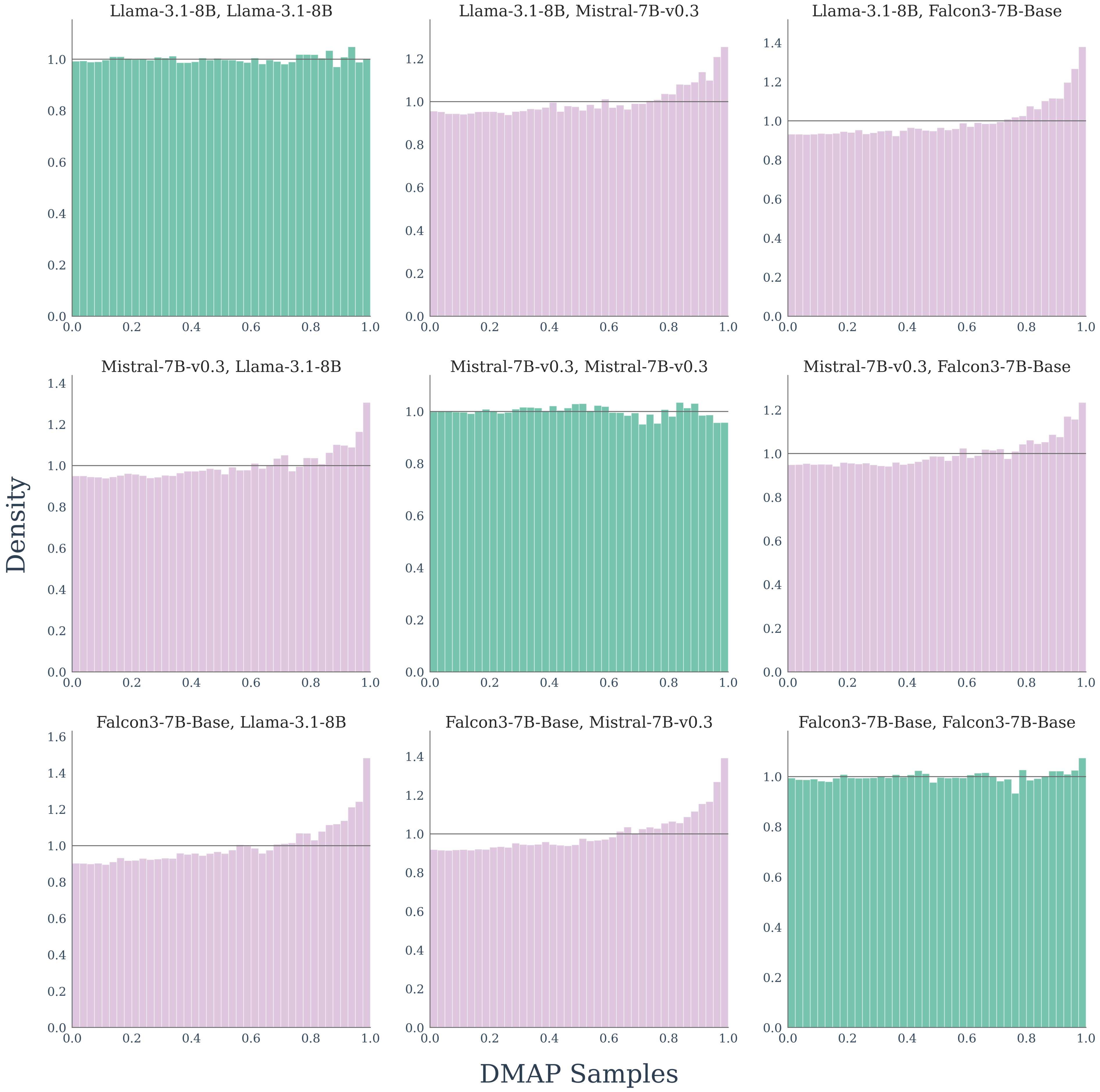}
    \caption{White and black box experiments excluding the prompt with texts of length 300. For these plots, the prompt was excluded. White and black box evaluation. Plots are labeled: (Generator Model, Evaluator Model). Plots on the main diagonal (green) correspond to the white box setting, while off-diagonal plots (pink) correspond to the black box setting. Observe that in the black box setting we see plots with heavy tails.}
    \label{fig:grid_without}
\end{figure}

\begin{figure}[t!]
    \centering
    \includegraphics[width=\textwidth]{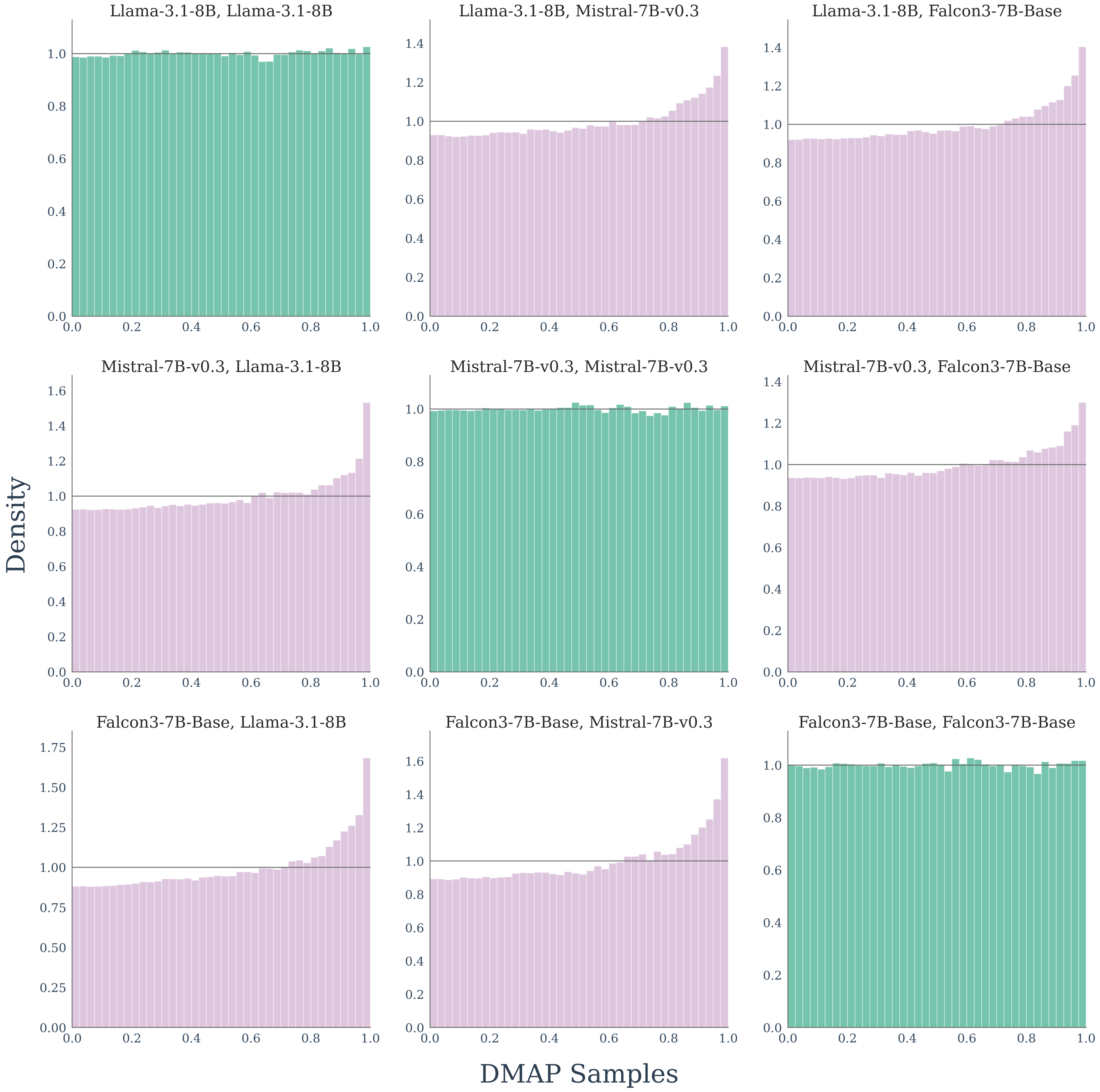}
    \caption{White and black box experiments including the prompt with texts of length 300. For these plots, the prompt was used in computation of next token probabilities of later tokens, but DMAP samples were not generated for tokens in the prompt. White and black box evaluation. Plots are labeled: (Generator Model, Evaluator Model). Plots on the main diagonal (green) correspond to the white box setting, while off-diagonal plots (pink) correspond to the black box setting. Observe that in the black box setting we see plots with heavy tails.}
    \label{fig:grid_with}
\end{figure}

To explore sensitivity to the initial tokens further, for both settings including and excluding the prompt, we conducted an additional 32 experiments. Here, we also consider an \emph{initial cutoff}. For an initial cutoff of $N$, we use the first $N$ tokens to compute conditional probabilities for tokens $N+1,\dots$ but do not plot the corresponding $N$ DMAP samples. This is intended to remove noise associated with the early tokens with insufficient context. We consider four core configurations
i) include prompt and initial cutoff = 30, ii) include prompt and initial cutoff = 0, iii) exclude prompt and initial cutoff = 30, and iv) exclude prompt and initial cutoff = 0. Figure \ref{fig:prompt_sens_300} analyses these four configurations for texts of length 300 tokens, while Figure \ref{fig:prompt_sens_50} considers texts of length 50 tokens. These results show that at realistic token counts both the prompt and initial cutoff have only a minor effect. However, at very small token counts, DMAP is affected by the prompt. This is to be expected as before, since the prompt is a significant proportion of the total tokens in this case and therefore the statistical effect is larger. This gives a further reason to caution against using DMAP on extremely small samples of data.

\begin{figure}[!ht]
    \centering
    \includegraphics[width=\textwidth]{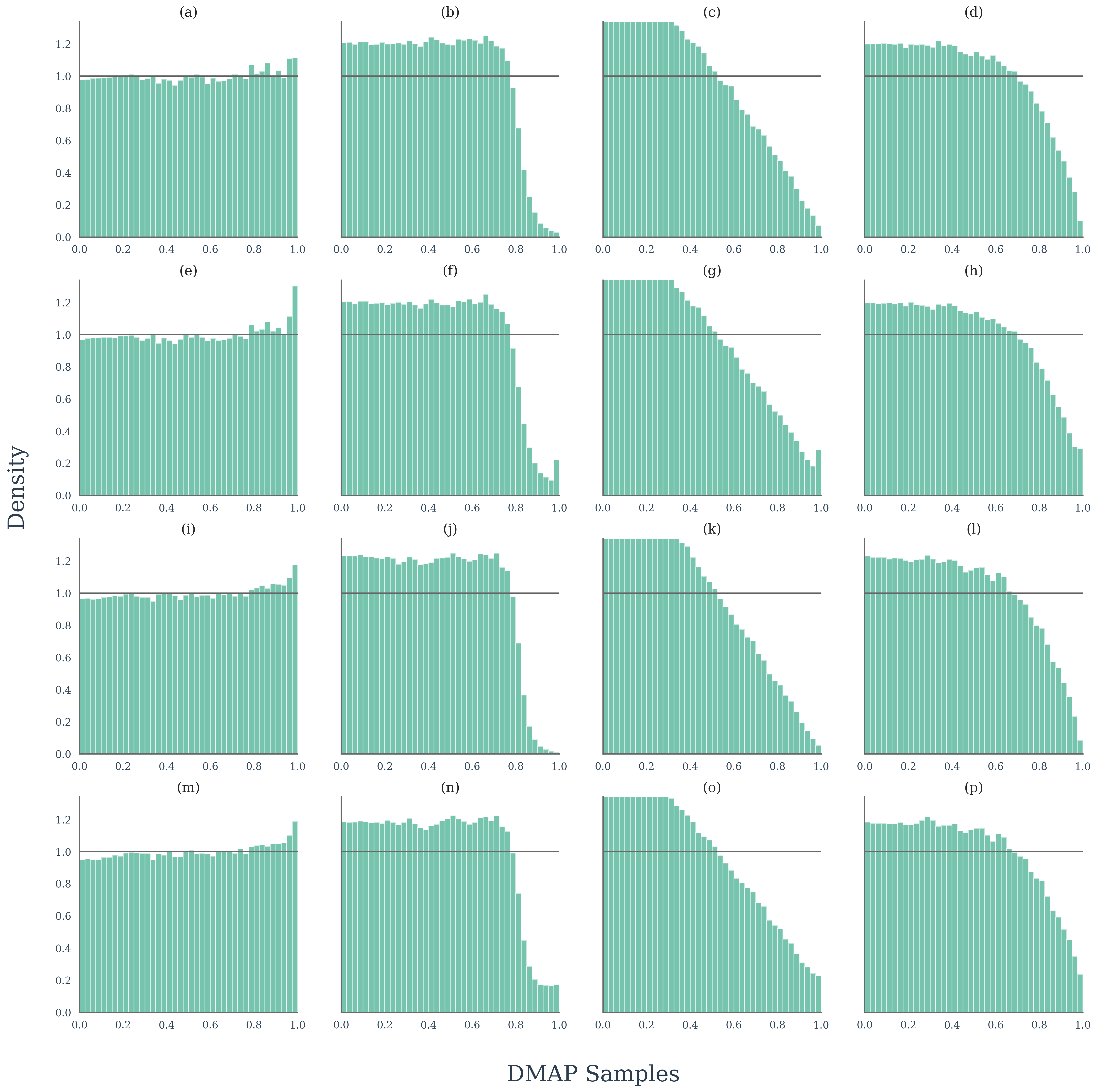}
    \caption{Prompt sensitivity analysis with texts of length 300 tokens. Each column corresponds to a generation strategy. From left to right columns, we have pure, top-$p$ = 0.8, temperature $\tau = 0.8$, and top-$k=50$ sampling. (a)-(d) include the prompt and set initial cutoff = 30. (e)-(h) include the prompt and set initial cutoff = 0. (i)-(l) exclude the prompt and set initial cutoff = 30. (m)-(p) include the prompt and set initial cutoff = 0. The second and fourth rows show setting initial cutoff = 0 increased variability. Comparing the first and third rows shows excluding the prompt causes a slight increase in noise. In contrast to Figure \ref{fig:prompt_sens_50}, overall we see stable characteristic curves and low sensitivity to both the prompt inclusion and initial cutoff in this setting where the sample size is realistic. This is to be expected, since the prompt is a small proportion of the total tokens in this case.}
    \label{fig:prompt_sens_300}
\end{figure}

\begin{figure}[t!]
    \centering
    \includegraphics[width=\textwidth]{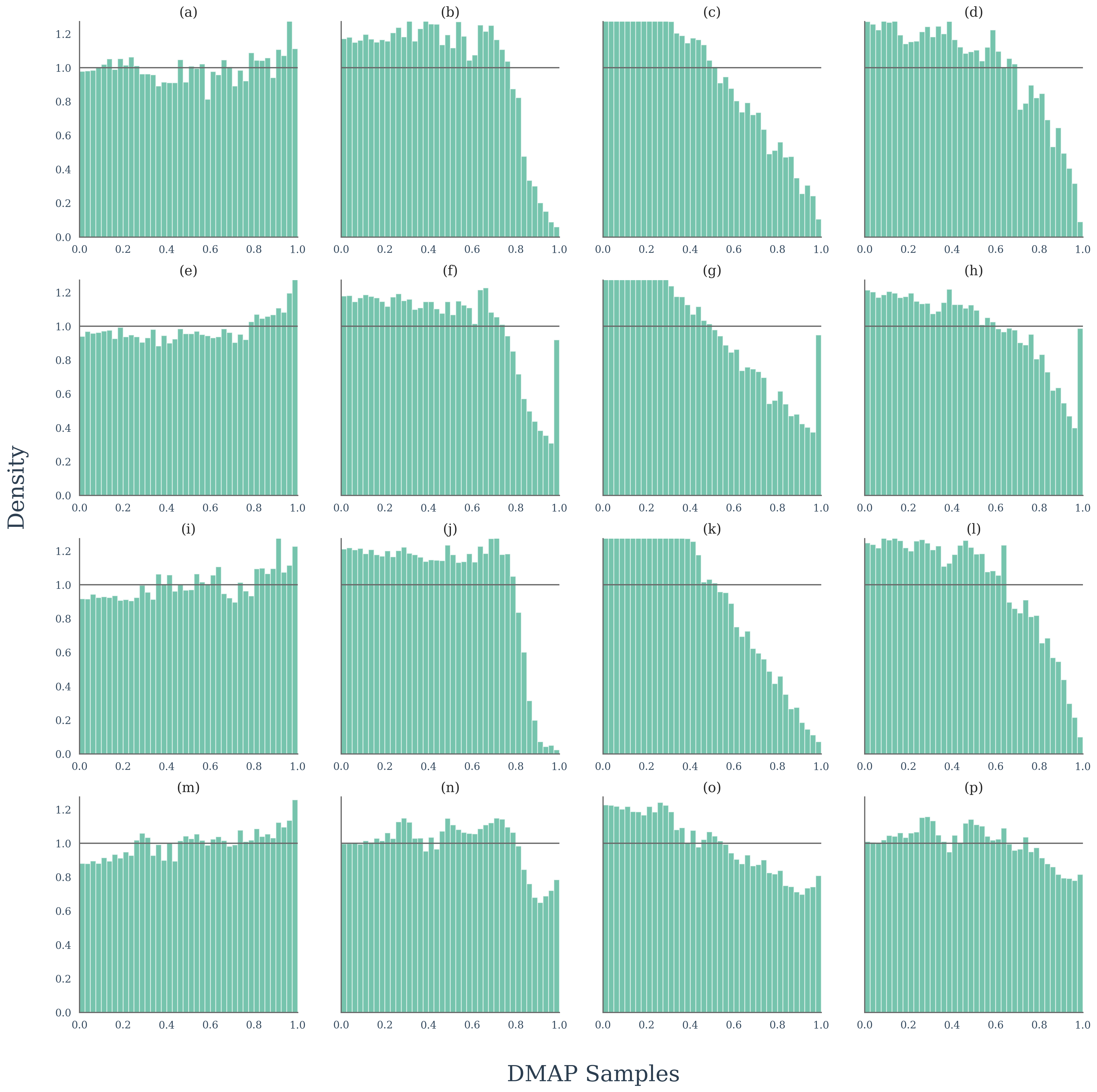}
    \caption{Prompt sensitivity analysis with texts of length 50 tokens. Each column corresponds to a generation strategy. From left to right columns, we have pure, top-$p$ = 0.8, temperature $\tau = 0.8$, and top-$k=50$ sampling. (a)-(d) include the prompt and set initial cutoff = 30. (e)-(h) include the prompt and set initial cutoff = 0. (i)-(l) exclude the prompt and set initial cutoff = 30. (m)-(p) include the prompt and set initial cutoff = 0. All four rows show setting significantly increased variability and noise compared to the setting with $300$ tokens per text (Figure \ref{fig:prompt_sens_300}). Compared to Figure \ref{fig:prompt_sens_300}, we increased sensitivity to both the prompt inclusion and initial cutoff in this setting where the sample sizes are very small. This is to be expected, since the prompt is a significant proportion of the total tokens in this case.}
    \label{fig:prompt_sens_50}
\end{figure}
\section{Relationship to Probability Integral Transform}

One way to view DMAP is that it builds on the classical probability integral transform (PIT), see \cite{fisher1928statistical,david1948probability,gneiting2007probabilistic}, as discussed in the related work section. However, a key difference between DMAP and PIT is that DMAP dynamically orders tokens by sorting the logits prior to generating a sample on the unit interval. As such, it also encodes \emph{rank} information as studied in works such as \cite{su2023detectllm}. For completeness, in Figure \ref{fig:pit_plot} we perform this extension of PIT to categorical variables where our dynamic token  ordering is replaced with a random token ordering. We have been unable to extract useful information from these plots. 

\begin{figure}[t!]
    \centering
    \includegraphics[width=\textwidth]{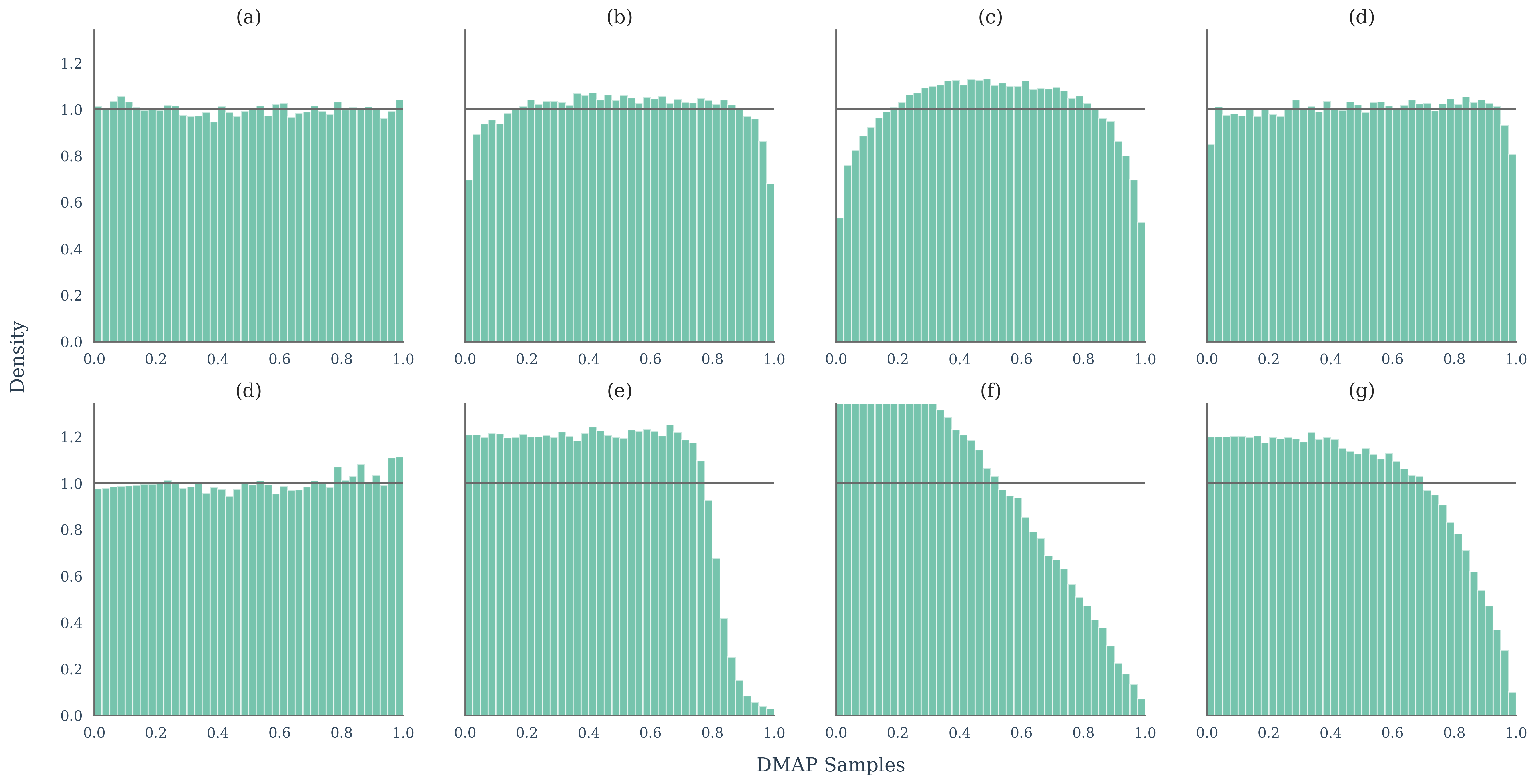}
    \caption{Comparison of DMAP with Probability Integral Transform. Each column corresponds to a generation strategy. From left to right columns, we have pure (top-$p$=1), top-$p$ = 0.8, temperature $\tau = 0.8$, and top-$k=50$ sampling. The top row (a)-(d) shows PIT with uniformly random token order. The bottom row (e)-(h) shows standard DMAP for comparison. In this scenario, we see plots for PIT with uniformly random token order do not effectively differentiate decoding strategies.}
    \label{fig:pit_plot}
\end{figure}


\section{Analysis of Adversarial Paraphrasing}

DMAP is a general tool for the statistical analysis of text, and not itself a specialized detector of machine-generated text. That said, we have seen throughout this paper that it can highlight important differences between machine and human generated text, for example, see Figure \ref{fig:comparison}.

Paraphrasing is a well-known and important technique that has been shown to be an effective adversarial attack against common machine-generated text detectors \citep{sadasivan2025aigeneratedtextreliablydetected}. In this appendix, we ask what the impact is of paraphrasing on DMAP visualizations. In particular, do paraphrasing attacks make DMAP visualizations of synthetic data look more like that of human data, analogous to their impact on common machine-generated text detectors? To answer this, we performed experiments using DIPPER \citep{krishna2024paraphrasing}, a popular model used to evaluate robustness to adversarial paraphrasing attacks, for example, see \cite{sadasivan2025aigeneratedtextreliablydetected,kempton2025temptest}. Our experiments show that DMAP visualizations are robust to paraphrasing attacks with DIPPER \citep{krishna2024paraphrasing}. In particular, we see that paraphrased machine-generated text and human text are clearly distinct, see Figures \ref{fig:para_squad}, \ref{fig:para_writing} and \ref{fig:para_xsum}.

In addition, these figures also show that DMAP identifies subtle shifts between synthetic data and paraphrased synthetic data, shedding some new light on how paraphrasing attacks alter the distribution of text. Collectively, these results are strong evidence that an effective text detector may be built on top of DMAP that is far more robust to paraphrasing attacks than existing approaches. In addition, these results prompt the use of DMAP to explore the whole array of paraphrasing attacks in the literature to obtain a deeper understanding of their underlying statistical mechanisms.

\begin{figure}[t!]
    \centering
    \includegraphics[width=1\linewidth]{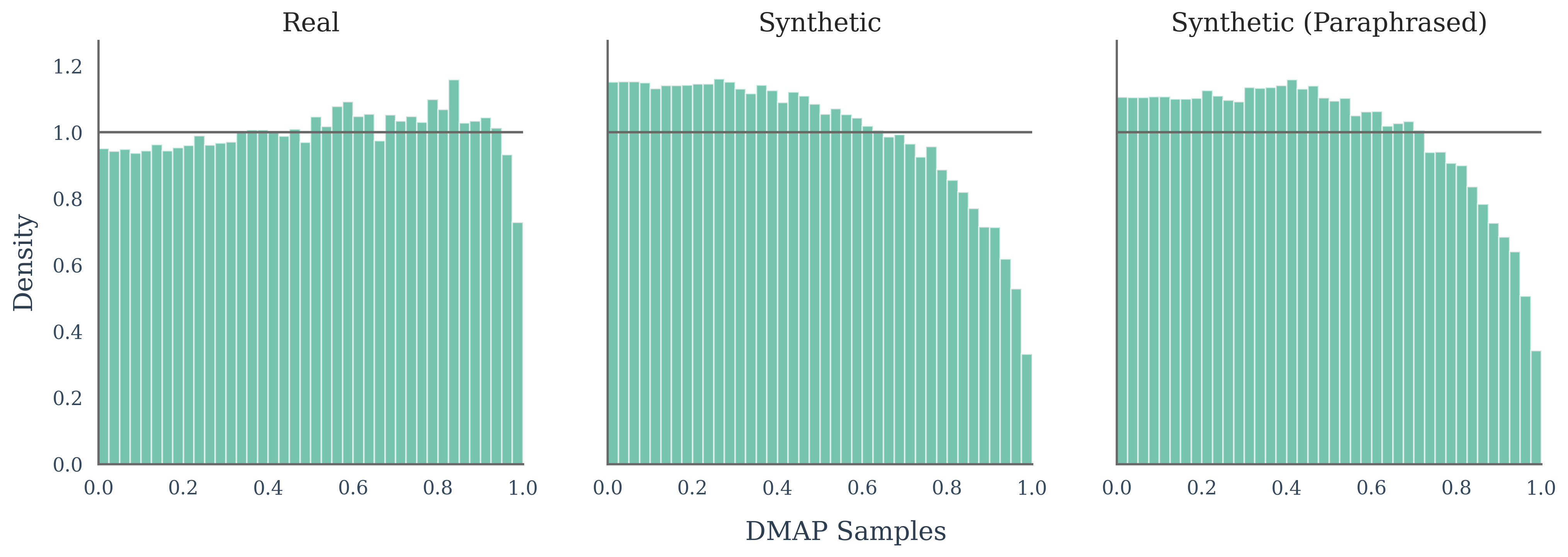}
    \caption{DMAP Analysis of Paraphrasing Attacks. These plots show DMAP visualizations for text generated by real, synthetic, and paraphrased synthetic data. The dataset used was SQuAD \citep{rajpurkar2016squad}. Synthetic data was generated using Llama-3.1-8B completions. Paraphrased data was created using the model DIPPER developed for adversarial paraphrasing attacks on machine-generated text detectors \citep{krishna2024paraphrasing}. These plots show that paraphrased machine-generated text and human text are clearly distinct in DMAP visualizations. In addition, DMAP sheds light on subtle changes in the distribution between standard synthetic and paraphrased text, where we see a slight flattening of the distribution.}
    \label{fig:para_squad}
\end{figure}

\begin{figure}[t!]
    \centering
    \includegraphics[width=1\linewidth]{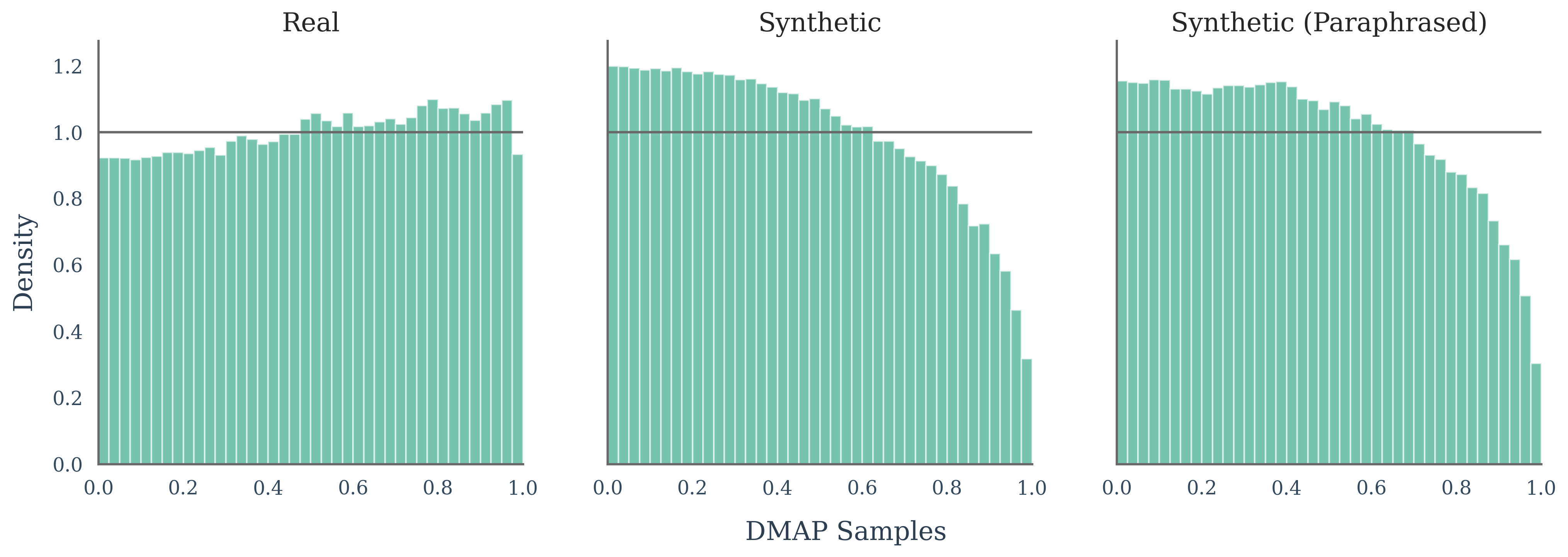}
    \caption{DMAP Analysis of Paraphrasing Attacks. These plots show DMAP visualizations for text generated by real, synthetic, and paraphrased synthetic data. The dataset used was Reddit Writing \citep{fan2018hierarchical}. Synthetic data was generated using Llama-3.1-8B completions. Paraphrased data was created using the model DIPPER developed for adversarial paraphrasing attacks on machine-generated text detectors \citep{krishna2024paraphrasing}. These plots show that paraphrased machine-generated text and human text are clearly distinct in DMAP visualizations. In addition, DMAP sheds light on subtle changes in the distribution between standard synthetic and paraphrased text, where we see a slight flattening of the distribution.}
    \label{fig:para_writing}
\end{figure}

\begin{figure}[t!]
    \centering
    \includegraphics[width=1\linewidth]{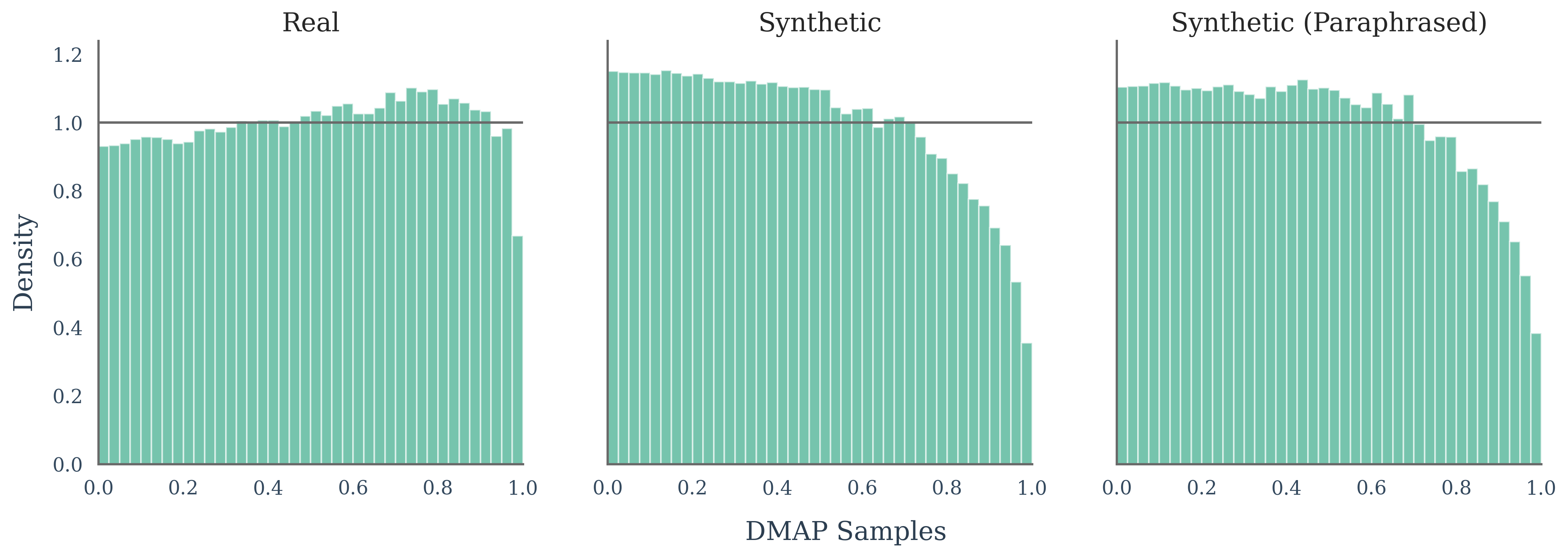}
    \caption{DMAP Analysis of Paraphrasing Attacks. These plots show DMAP visualizations for text generated by real, synthetic, and paraphrased synthetic data. The dataset used was XSum \citep{narayan2018don}. Synthetic data was generated using Llama-3.1-8B completions. Paraphrased data was created using the model DIPPER developed for adversarial paraphrasing attacks on machine-generated text detectors \citep{krishna2024paraphrasing}. These plots show that paraphrased machine-generated text and human text are clearly distinct in DMAP visualizations. In addition, DMAP sheds light on subtle changes in the distribution between standard synthetic and paraphrased text, where we see a slight flattening of the distribution.}
    \label{fig:para_xsum}
\end{figure}

\section{Experimental Setup for Section \ref{sec:Detecting}}\label{detectsetup}

We follow a setup similar to, for example, \cite{mitchell2023detectgpt} or \cite{kempton2025temptest}. In particular, given a large language model and sample of human text from XSum \citep{narayan2018don}, SQuAD \citep{rajpurkar2016squad}, or WritingPrompts (Writing) \citep{fan2018hierarchical}, we construct $150$ tokens of synthetic text using the first $30$ tokens as context. We then perform zero-shot classification on the balanced dataset of real and synthetic data. For Fast-DetectGPT \citep{baofast} and DetectGPT \citep{mitchell2023detectgpt} we use GPT-Neo 2.7b \citep{gpt-neo}, which Fast-DetectGPT report as being empirically superior for black box detection with their method. For Binoculars we use the recommended combination of Falcon 7b for the observer model and Falcon 7b Instruct for the performer model \citep{almazrouei2023falconseriesopenlanguage}. The generation models we use are Llama-3.1 8B \citep{grattafiori2024llama}, Mistral-7B-v0.3 \citep{jiang2023mistral7b}, and Qwen3-8B \citep{yang2025qwen3technicalreport}. For each method, we report AUROC as the evalulation score as in \citep{mitchell2023detectgpt,baofast,kempton2025temptest}.

\section{Experimental Setup for Section \ref{sec:finetuning}}\label{sec:finetuning_experimental_details}
We conduct our experiments on two sizes of Pythia models (410m, 1B) fine-tuned with one of three instruction fine-tuning datasets: one including only human-written text and two with responses that have been partially regenerated with an external language model. 

\paragraph{Fine-Tuning.} The instruction fine-tuning datasets are the following: (1) OASST2 (\cite{kopf2023}), a human-written instruction tuning dataset, 
(2) OASST2 with responses regenerated with a Llama 3.1 8B model with generation temperature 0.7, and 
(3) The same as (2) but with generation temperature 1.0.
OASST2 is a tree-structured multi-turn dataset with various conversation paths, but for our fine-tuning purposes we extract only the first prompt-response pair, as we only evaluate the response to a single query. Although this means we are using a portion of the text in the dataset, we continue to refer to it as OASST2 throughout this work. The data is split into a train and validation set following the original splitting of OASST2 on the Huggingface-hosted dataset \footnote{\url{https://huggingface.co/datasets/OpenAssistant/oasst2}}. 
Because the responses in fine-tuning datasets (2) and (3) are generated using a non-instruction-tuned base model, we keep the first 20 tokens of the human response text as part of the response generation prompt. For the creation of the DMAP plots, we only consider the tokens beyond this point. 

For the fine-tuning step, the prompt-response pairs are formatted to be separated with a line break without any additional role tags or special tokens, so as to ensure a naturalistic output and to allow for exact comparison between fine-tuned and non-fine-tuned models. 

\paragraph{Evaluation.} We use OPT-125 as the evaluator model throughout, creating DMAP plots for the validation split of the respective datasets. In each case, the evaluated models are prompted in the same way as with the creation of the the training data, with only the newly-generated tokens contributing to the creation of the DMAP plot.

\subsection{Results}
We first visualize the DMAP distribution for each of our three fine-tuning datasets in Figure \ref{fig:sft_training_data_dmap}, each of which exhibit a relatively typical shape for the respective data sources: human-generated, synthetic pure-sampled and synthetic temperature-sampled at temperature $0.7$.

\begin{figure}[t!]
    \centering
    \includegraphics[width=1\linewidth]{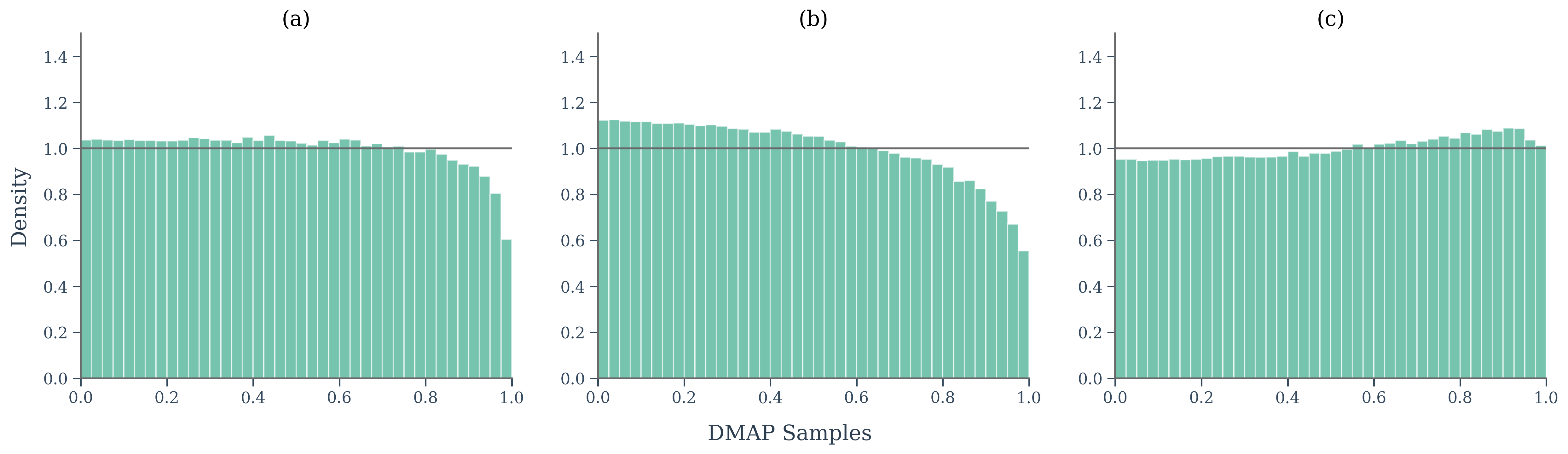}
    \caption{DMAP plots created using OPT-125 for the three fine-tuning datasets: a) OASST2 human-written prompt-response pairs, b) OASST2 with responses regenerated by Llama 3.1 8B at temperature 0.7. and c) OASST2 with responses regenerated by Llama 3.1 8B at temperature 1.0.}
    \label{fig:sft_training_data_dmap}
\end{figure}

We observe next the extent to which fine-tuning on these datasets affects the DMAP plots.
We present the results for a single model (Pythia 1B) in Figure \ref{fig:SFT_pythia1b}, 
with the remaining experiment for Pythia 410m found here in Figure \ref{fig:SFT_pythia-410} showing the same visual pattern.

\begin{figure}[t!]
    \centering
    \includegraphics[width=1\linewidth]{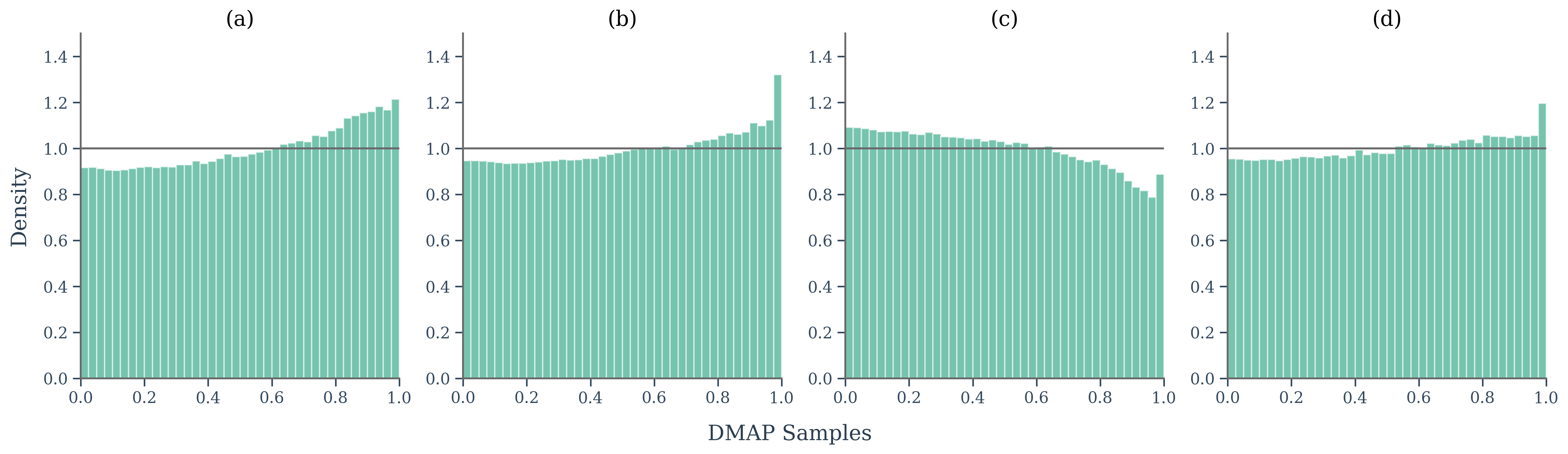}
    \caption{DMAP plots generated by pure sampling from Pythia 410m, models with (a) no fine-tuning, (b) fine-tuned on OASST2 human data, (c) fine-tuned on OASST2 with responses regenerated by Llama 3.1 8B at temperature 0.7, (d) fine-tuned on OASST2 with responses regenerated by Llama 3.1 8B at temperature 1.0.}
    \label{fig:SFT_pythia-410}
\end{figure}

\section{Observations on detector design}\label{app:detector-observations}

\subsection{Using different detector and generator models can be useful}
Methods based on probability curvature seem to work best when the generating and detecting models are the same (although \cite{hans2024spotting} and \cite{dubois2025mosaic} use more than one model in detection). Comparing the distributions of human-written text (Figure \ref{fig:humancomparison}), pure-sampled text where generator and detector models are the same (Figure \ref{fig:comparison}) and pure-sampled text with different generator and detector models (Figure \ref{fig:blackboxbasemodelcomparison}), one sees that human-written text looks most different from machine text when a different detector model is used. Perhaps an alternative conclusion to be drawn from Table \ref{tab:auroc_results} is that a variant Fast-DetectGPT can remain an effective detector of pure-sampled text, but DMAP should be first used to calibrate whether machine generated text is head-biased or tail-biased.

\subsection{Human-written text has a more subtle distribution than currently exploited}
Detectors based on probability curvature exploit the fact that human-written text is tail-biased. In Figure \ref{fig:humancomparison} we see that, while this is true, it is also true that they contain tokens from the very tail of the language model probability distribution less often (tail-collapse). This is particularly important as the tail of the probability distribution is where log-likelihood values are at their most extreme. Detectors using log-likelihood or log-rank to distinguish between human and machine generated text would presumably be more effective if they discounted this bottom five percent of the probability distribution where human-written text is under-represented.

\subsection{Instruction Tuned Models are Easier to Detect than Base Models} It was noted that in \cite{ippolito2020automatic} that it is easiest for a machine to detect machine-generated text when decoding strategies such as top-p, top-$k$ and temperature have been used. Figures \ref{fig:comparison} and \ref{fig:humancomparison} would support this conclusion, the distributional differences between human and machine text are clearly much greater when top-$p$, top-$k$ or temperature sampling have been used.

\section{Further Questions}
\begin{enumerate}
\item In \cite{shen2024thermometer} the authors attempt to curb the overconfidence of instruction fine-tuned language models by using temperature scaling with temperature larger than 1 in order to bring the perplexity in line with pre-trained language models. Does DMAP give any insight to this process? What would happen if, instead of trying to bring perplexity in line with the pre-trained language model, one tried to bring the DMAP plots in line with human text? DMAP plots are a much better tool for checking calibration with a pre-trained language model than the currently used likelihood of the top token.
\item In Figure \ref{fig:humancomparison} we show DMAP plots for different examples of human-written text. These plots are not perfectly flat, and are slightly different in different settings (e.g. poetry vs news). Do these plots effectively describe the underperformance of the evaluator model in different settings in a way that separates the underperformance of the language model at next token prediction from the inherent difficulty of next-token prediction (which varies by setting)? Can one formalise this and extract useful metrics?
\item Figure \ref{fig:grid_with} shows texts generated by Llama, Mistral and Falcon, evaluated by Llama, Mistral and Falcon. In each case, the shape of the DMAP plot is very different depending on whether the evaluator and generator models are the same. Can these ideas form the basis for a new approach to language model identification?
\end{enumerate}

\section{Further Plots}\label{sec:furtherplots}

\subsection{Synthetic Text}

In this section we include illustrative plots with larger evaluation models. We see similar phenomena as in Figure \ref{fig:comparison}.

\begin{figure}[t!]
    \centering
    \includegraphics[width=\textwidth]{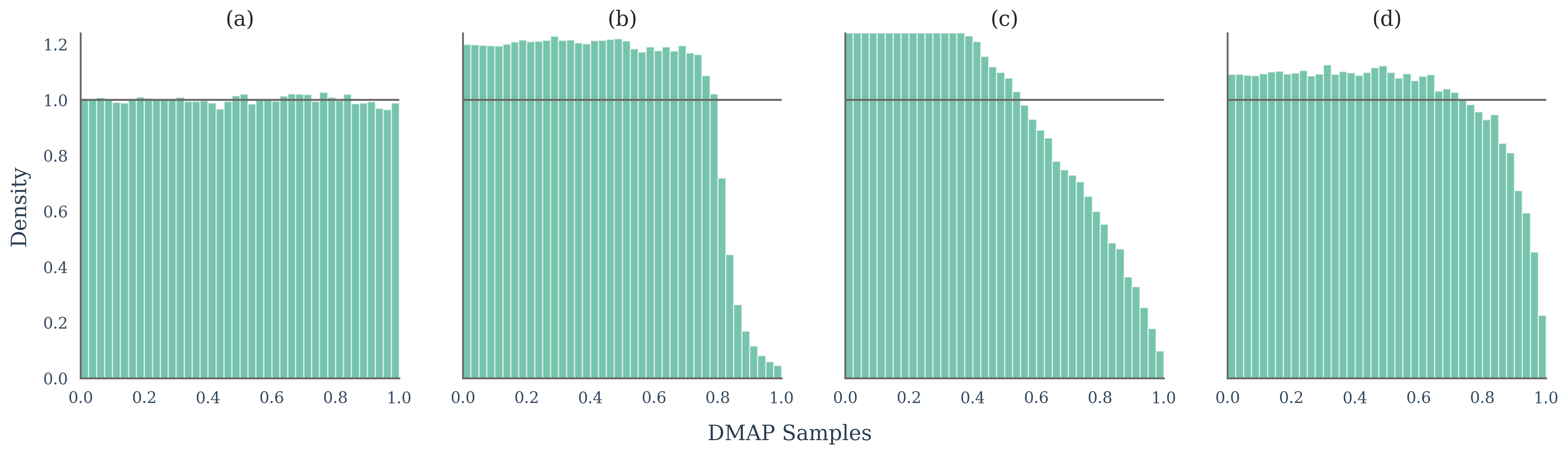}
    \caption{Illustrative DMAP histograms. Plots of XSum data \citep{narayan2018don} generated by OPT-6.7b, evaluated by OPT-6.7b. The generation strategies (left to right) are (a) pure sampling, (b) top-$p$ = 0.8 sampling, (c) temperature $\tau = 0.8$ sampling,  and (d) top-$k=50$. }
    \label{fig:humancomparison}
\end{figure}

\begin{figure}[t!]
    \centering
    \includegraphics[width=\textwidth]{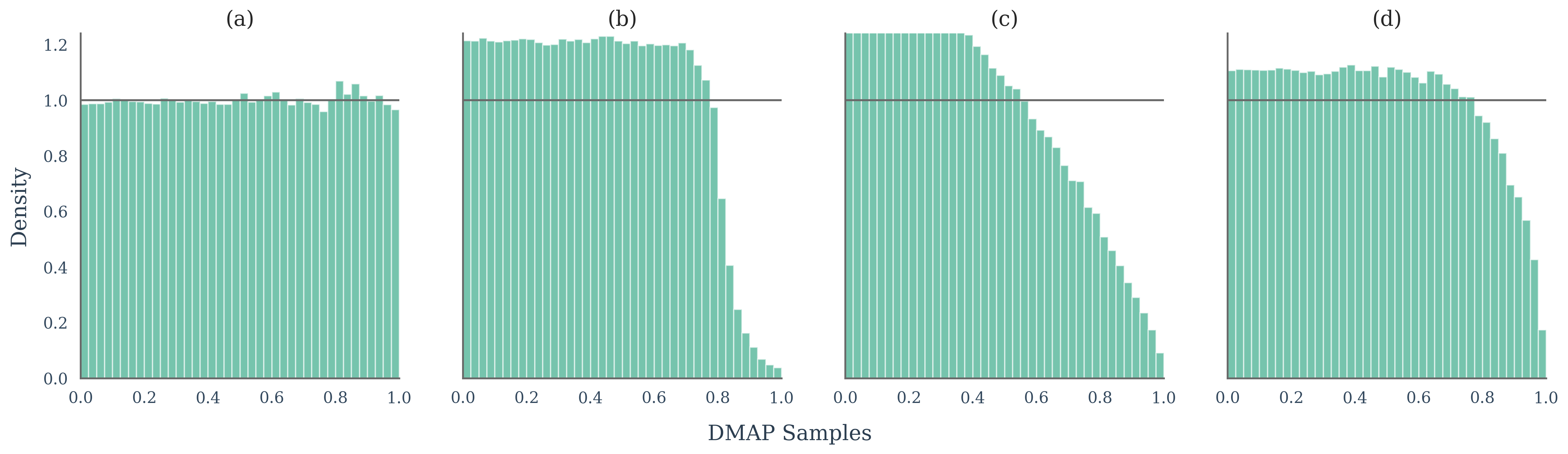}
    \caption{Illustrative DMAP histograms. Plots of XSum data \citep{narayan2018don} generated by OPT-2.7b, evaluated by OPT-2.7b. The generation strategies (left to right) are (a) pure sampling, (b) top-$p$ = 0.8 sampling, (c) temperature $\tau = 0.8$ sampling,  and (d) top-$k=50$. }
    \label{fig:humancomparison2}
\end{figure}

\begin{figure}[t!]
    \centering
    \includegraphics[width=\textwidth]{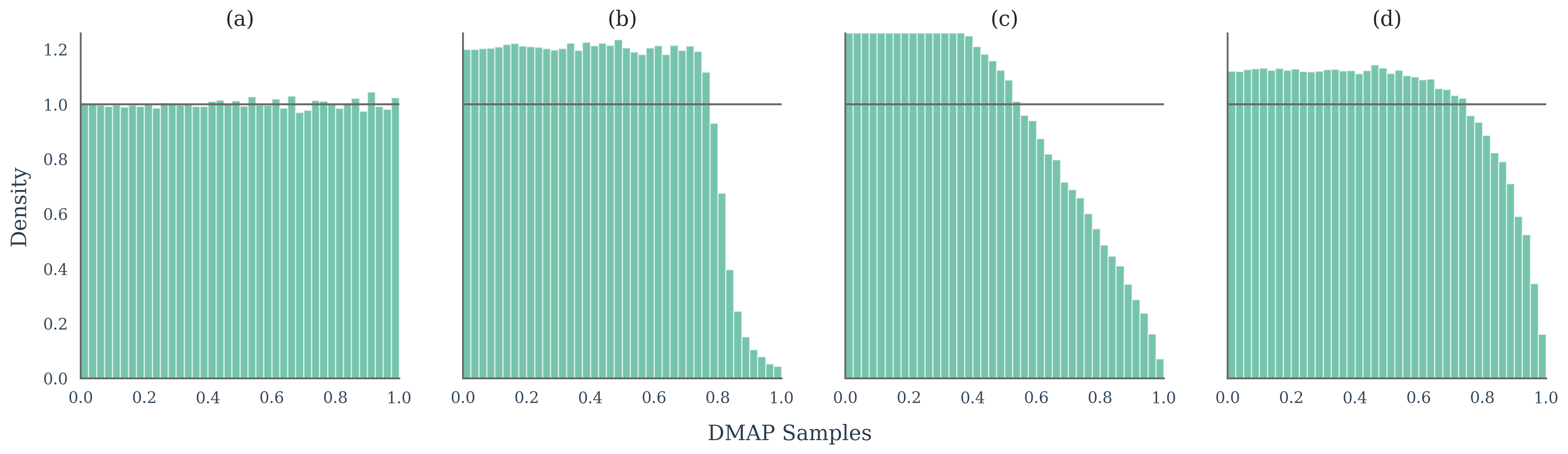}
    \caption{Illustrative DMAP histograms. Plots of XSum data \citep{narayan2018don} generated by OPT-1.3b, evaluated by OPT-1.3b. The generation strategies (left to right) are (a) pure sampling, (b) top-$p$ = 0.8 sampling, (c) temperature $\tau = 0.8$ sampling,  and (d) top-$k=50$. }
    \label{fig:humancomparison3}
\end{figure}

\begin{figure}[t!]
    \centering
    \includegraphics[width=\textwidth]{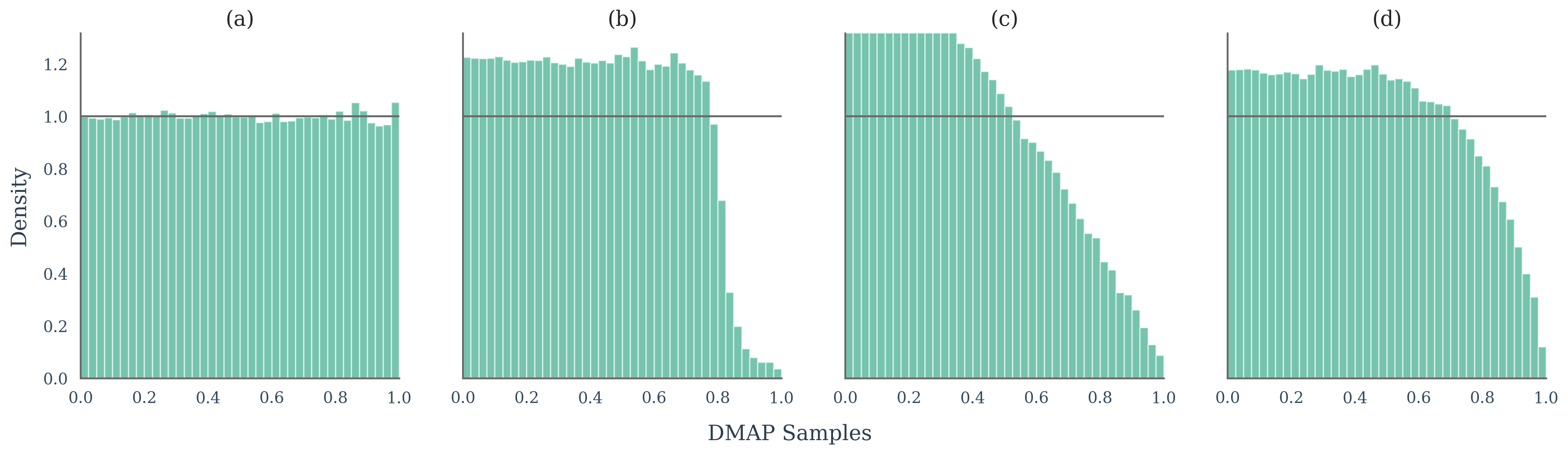}
    \caption{Illustrative DMAP histograms. Plots of XSum data \citep{narayan2018don} generated by OPT-350m, evaluated by OPT-350m. The generation strategies (left to right) are (a) pure sampling, (b) top-$p$ = 0.8 sampling, (c) temperature $\tau = 0.8$ sampling,  and (d) top-$k=50$. }
    \label{fig:humancomparison4}
\end{figure}

\subsection{Human-written Text}
We plot human-written text in Figure \ref{fig:humancomparison5}, evaluated by OPT-125m. We use the RAID dataset across four different categories: scientific abstracts, books, news and poetry.

\begin{figure}[t!]
    \centering
    \includegraphics[width=\textwidth]{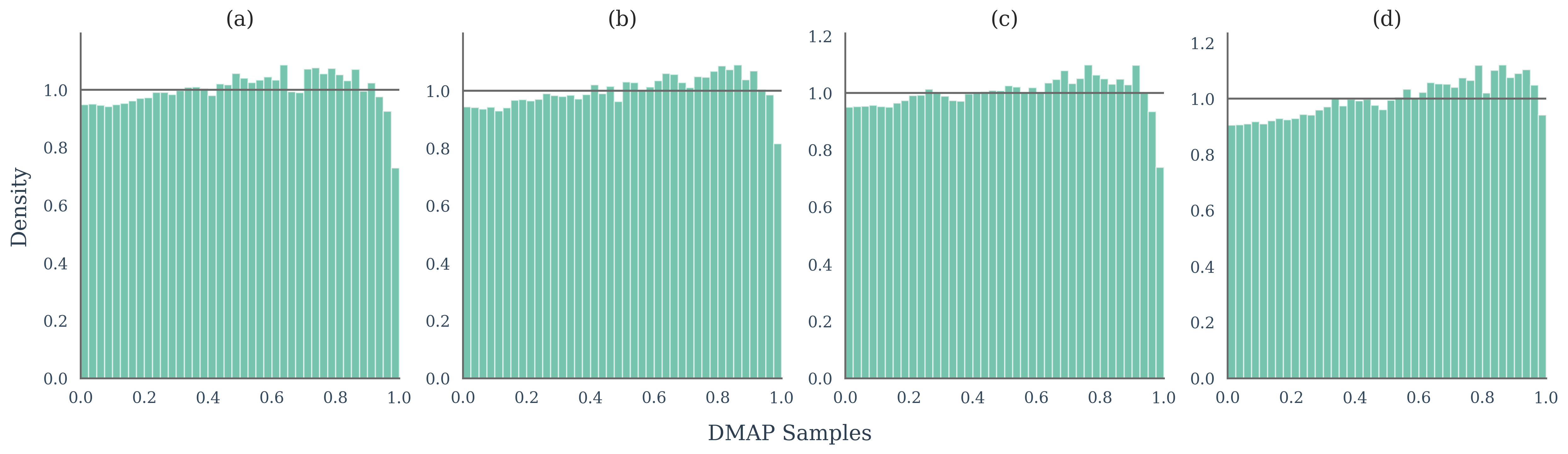}
    \caption{Human-written text in four categories: abstracts (a), books (b), news (c), and poetry (d).}
    \label{fig:humancomparison5}
\end{figure}

We see differences in these plots, corresponding to the differing competence of OPT-125m in these areas.

\subsection{Black Box Base Language Models}
In Figure \ref{fig:grid_without} we plotted pure generated text from base language models Llama 3.1 8B, Mistral 7B and Qwen3 8B, evaluated by each other. In Figure \ref{fig:blackboxbasemodelcomparison} we see the same texts evaluated by OPT-125m. The images here further support the claim in the main text that pure-sampled text looks flat when the generator and evaluator models are the same, and looks heavy tailed (tail bias) when another base model is used for evaluation.

\begin{figure}[t]
    \centering
    \includegraphics[width=\textwidth]{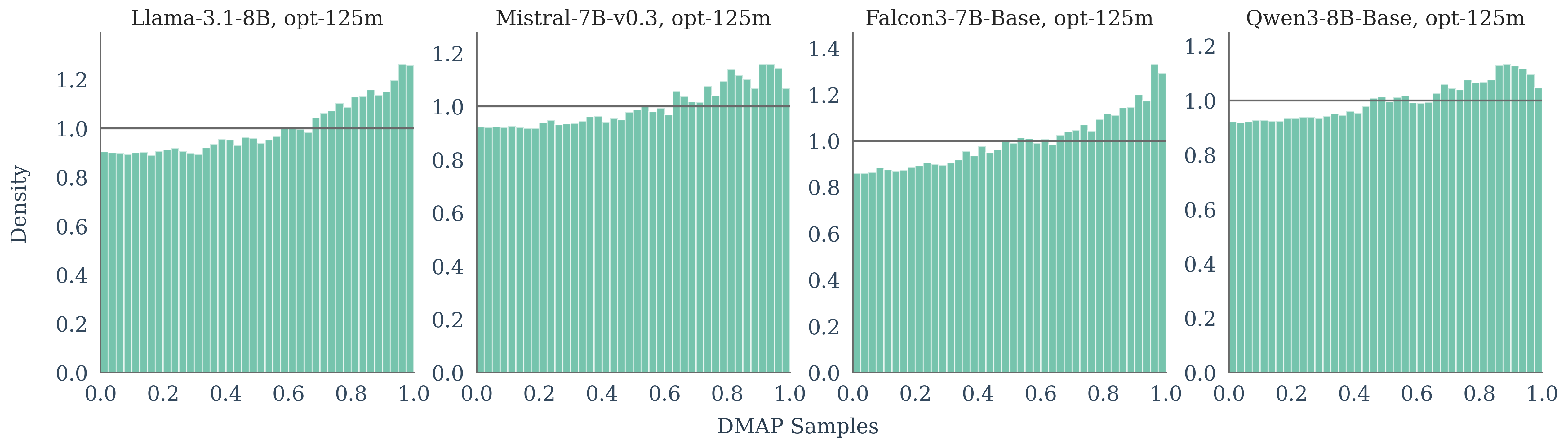}
    \caption{Llama 3.1 8B, Mistral 7B, Falcon 7b and Qwen3 8B generated XSum data, evaluated by OPT-125m.}
    \label{fig:blackboxbasemodelcomparison}
\end{figure}

\section{LLM Usage}

LLMs were used to polish written text in the preparation of this manuscript.

\end{document}

%% file: math_commands.tex
\usepackage{amsmath,amsfonts,bm}

\def\eqref#1{equation~\ref{#1}}

\def\1{\bm{1}}

\DeclareMathAlphabet{\mathsfit}{\encodingdefault}{\sfdefault}{m}{sl}
\SetMathAlphabet{\mathsfit}{bold}{\encodingdefault}{\sfdefault}{bx}{n}



%% file: tables/main/blackbox-table.tex
\begin{table}[t]
\caption{Performance comparison across different machine-generated text black box detection methods, language models, and datasets (XSum \citep{narayan2018don}, SQuAD \citep{rajpurkar2016squad}, and WritingPrompts (Writing, \citealt{fan2018hierarchical})). Results are shown for two sampling configurations: top-$k=50$ and pure sampling (top-$k=\text{None}$). As predicted by DMAP, state-of-the-art detectors based on `probability curvature' are effective when top-k sampling is used but not with pure sampled text.}
\label{tab:auroc_results}
\begin{center}
\resizebox{\linewidth}{!}{%
\begin{tabular}{@{}ll@{\hspace{0.5em}}cc@{\hspace{0.8em}}cc@{\hspace{0.8em}}cc@{}}
\toprule
\multirow{2}{*}{\textbf{Method}} & \multirow{2}{*}{\textbf{Model}} & \multicolumn{2}{c}{\textbf{XSum}} & \multicolumn{2}{c}{\textbf{SQuAD}} & \multicolumn{2}{c}{\textbf{Writing}} \\
\cmidrule(lr){3-4} \cmidrule(lr){5-6} \cmidrule(lr){7-8}
& & $k=50$ & $k=\text{None}$ & $k=50$ & $k=\text{None}$ & $k=50$ & $k=\text{None}$ \\
\midrule
\multirow{3}{*}{\textsc{Fast-DetectGPT}} 
& Llama-3.1-8B & 0.702 & 0.200 & 0.739 & 0.208 & 0.915 & 0.289 \\
& Mistral-7B-v0.3 & 0.770 & 0.276 & 0.819 & 0.299 & 0.906 & 0.339 \\
& Qwen3-8B & 0.765 & 0.289 & 0.612 & 0.320 & 0.923 & 0.377 \\
\midrule
\multirow{3}{*}{\textsc{DetectGPT}}
& Llama-3.1-8B & 0.606 & 0.408 & 0.527 & 0.299 & 0.723 & 0.422 \\
& Mistral-7B-v0.3 & 0.679 & 0.486 & 0.586 & 0.365 & 0.688 & 0.457 \\
& Qwen3-8B & 0.635 & 0.445 & 0.463 & 0.380 & 0.724 & 0.479 \\
\midrule
\multirow{3}{*}{\textsc{Binoculars}}
& Llama-3.1-8B & 0.825 & 0.325 & 0.849 & 0.365 & 0.942 & 0.410 \\
& Mistral-7B-v0.3 & 0.823 & 0.350 & 0.851 & 0.416 & 0.931 & 0.404 \\
& Qwen3-8B & 0.857 & 0.416 & 0.752 & 0.467 & 0.949 & 0.492 \\
\bottomrule
\end{tabular}}
\end{center}
\end{table}